\documentclass[pmlr]{jmlr}

\makeatletter
\def\ps@firstpage{%
  \def\@oddhead{}%
  \def\@evenhead{}%
  \def\@oddfoot{\normalfont\hfil\thepage\hfil}%
  \def\@evenfoot{\normalfont\hfil\thepage\hfil}%
}
\makeatother

\usepackage{hyperref}
\usepackage{microtype}      
\usepackage{amsmath}
\usepackage{nicefrac}       
\usepackage{url}            

\usepackage{verbatim}
\usepackage{enumitem}
\usepackage{euscript}
\usepackage{latexsym}
\usepackage{graphics}
\usepackage{graphicx}

\usepackage{url}
\usepackage{xspace}

\usepackage{color}
\usepackage{wrapfig}
\usepackage[section]{algorithm}
\usepackage[noend]{algorithmic}

\setlength{\textheight}{9in}
\setlength{\topmargin}{-0.600in}
\setlength{\headheight}{0.2in}
\setlength{\headsep}{0.250in}
\setlength{\footskip}{0.5in}
\flushbottom
\setlength{\textwidth}{6.5in}
\setlength{\oddsidemargin}{0in}
\setlength{\evensidemargin}{0in}
\setlength{\columnsep}{2pc}
\setlength{\parindent}{1em}

\usepackage{graphicx}

\newcommand{\hp}{\ensuremath{\hat{\phi}}}
\newcommand{\R}{\ensuremath{\mathbb{R}}}

\newcommand{\denselist}{\itemsep -2pt\parsep=-1pt\partopsep -2pt}

\newcommand{\omt}[1]{}

\newcommand{\Eu}[1]{\ensuremath{\EuScript{#1}}}
\newcommand{\eps}{\varepsilon}

\newcommand{\s}[1]{{\textsf{ \small #1}}}
\newcommand{\E}{\s{\textbf{E}}}
\renewcommand{\Pr}{\s{\textbf{Pr}}}

\pagestyle{plain}



\jmlryear{(Preprint 2026)}

\title{Relative Error Embeddings of the Gaussian Kernel Distance}

\begin{document}

\author{%
\Name{Di Chen}\thanks{Contributed as a student at HKUST while visiting University of Utah, partially supported by RGC grants GRF-16208415, GRF-621413 and GRF-16211614.} 
  \Email{dchenad@connect.ust.hk}\\
  \addr Noah's Ark Lab, Huawei Technologies\\ 
  Units 525-530, Core Building 2\\
  Hong Kong Science Park, Shatin, Hong Kong 
  \AND 
  \Name{Jeff M. Phillips}\thanks{Thanks to support by NSF CCF-1350888, IIS-1251019, ACI-1443046, and CNS-1514520.} 
  \Email{jeffp@cs.utah.edu}\\ 
  \addr University of Utah\\ 
  50 S Central Campus Dr. 3190\\
  Salt Lake City, UT 84112\\
  United States of America
} 

\maketitle

\begin{abstract}
	A reproducing kernel defines an embedding of a data point into an infinite dimensional reproducing kernel Hilbert space (RKHS).  The norm in this space describes a distance, which we call the kernel distance.  The random Fourier features (of Rahimi and Recht) describe an oblivious approximate mapping into finite dimensional Euclidean space that behaves similar to the RKHS.  We show in this paper that for the Gaussian kernel the Euclidean norm between these mapped to features has $(1+\eps)$-relative error with respect to the kernel distance.    When there are $n$ data points, we show that $O((1/\eps^2) \log n)$ dimensions of the approximate feature space are sufficient and necessary.  
	Without a bound on $n$, but when the original points lie in $\R^d$ and have diameter bounded by $\mathcal{M}$, then we show that $O((d/\eps^2) \log \mathcal{M})$ dimensions are sufficient, and that this many are required, up to $\log(1/\eps)$ factors.  
	We empirically confirm that relative error is indeed preserved for kernel PCA using these approximate feature maps.  
\end{abstract}


\section{Introduction}

The kernel trick in machine learning allows for non-linear analysis of data using many techniques such as PCA and SVM which were originally designed for linear analysis.  The ``trick'' is that these procedures only access data through inner products between data points, and the standard dot product can be replaced with a non-linear inner product kernel $K(\cdot,\cdot)$.  
Now given $n$ data points, one can compute the $n \times n$ gram matrix $G$ of all pairwise inner products; that is so $G_{i,j} = K(x_i, x_j)$ for all $x_i, x_j$ in input data set $X$.  
Then the analysis can proceed using just the gram matrix $G$.  

However, for large data sets, constructing this $n \times n$ matrix is a computational bottleneck, so methods have been devised for lifting $n$ data points $P \subset \R^d$ to a high-dimensional space $\R^m$ (but where $m \ll n$) so that the Euclidean dot product in this space approximates the non-linear inner product defined by $K$.  

For reproducing kernels $K$, there exists a lifting $\phi : \R^d \to \Eu{H}_K$, where $\Eu{H}_K$ is the reproducing kernel Hilbert space.  It is in general infinite dimensional, but every finite subset of $n$ points $\Phi(X) = \{\phi(x) \mid x \in X\}$ has the span of an $n$-dimensional Euclidean space.  That is $K(x,y) = \langle \phi(x), \phi(y) \rangle$.  
Moreover, we can define the norm of a point in $\Eu{H}_K$ as $\|\phi(x)\|_{\Eu{H}_K} = \sqrt{\langle \phi(x), \phi(x) \rangle}$ using the inner product, and then due to linearity, a distance (the \emph{kernel distance}) between two points is defined:		
\begin{align*}
D_K(x,y) = \|\phi(x) - \phi(y)\|_{\Eu{H}_K}
=& 
\sqrt{\|\phi(x)\|_{\Eu{H}_K}^2 + \|\phi(y)\|_{\Eu{H}_K}^2 - 2 \langle \phi(x), \phi(y) \rangle}\\
=&
\sqrt{K(x,x) + K(y,y) - 2K(x,y)}.
\end{align*}
For reproducing kernels (actually a subset called characteristic kernels) this is a metric~\citep{SGFSL10,muller97}.  

Thus we may desire an approximate lifting $\hat \phi : \R^d \to \R^m$ such that with probability at least $1-\delta$ for all $x,y \in X$
\[
(1-\eps) \leq \frac{D_K(x,y)}{\|\hat \phi(x) - \hat \phi(y)\|} \leq (1+\eps).  
\]
It turns out, one can always algorithmically construct such a lifting with $m = O((1/\eps^2) \log (n/\delta))$ by the famous Johnson-Lindenstrauss (JL) Lemma~\citep{JL84}.  However, unlike the JL Lemma, there is not always known an implicit construction.  In general, we must first construct the $n \times n$ gram matrix, revealing an $n$-dimensional subspace (through an $O(n^3)$ time eigendecomposition) and then apply $m = O((1/\eps^2) \log (n/\delta))$ random projections.  

So in recent years there have been many types of kernels considered for these implicit embeddings with various sorts of error analysis, such as for Gaussian kernels~\citep{rahimi2007random,lopez2014randomized,SS15,SS15-UAI}
group invariant kernels~\citep{li2010random}, 
min/intersection kernels~\citep{maji2009max}, 
dot-product kernels~\citep{kar2012random}, 
information spaces~\citep{AKMVV16},
and polynomial kernels~\citep{hamid2013compact,ANW14}.  

In this document we reanalyze one of the most widely used and first variants, the Random Fourier Features, introduced by \cite{rahimi2007random}.  It applies to symmetric shift-invariant kernels which include Laplace, Cauchy, and most notably Gaussian.  We will primarily focus on  Gaussian kernels, defined $K(x,y) = e^{-\frac{\|x - y\|^2}{2 \sigma^2}}$, unless specified otherwise. 
It is characteristic, hence $D_K$ is a metric.  

\subsection{Existing Properties of Gaussian Kernel Embeddings}

\cite{rahimi2007random} defined two approximate embedding functions: 
$\tilde \phi : \R^d \to \R^m$ and $\hat \phi : \R^d \to \R^m$ (defined below).  Only the former appears in the final version of paper, but the latter is also commonly used throughout the literature~\citep{SS15-UAI}.  
Both features use random variables $\omega_i \in \R^d$ drawn uniformly at random from the Fourier transform of the kernel function; in the case of the Gaussian kernel, the Fourier transform is again a Gaussian, specifically $\omega_i \sim \mathcal{N}_d(0,\sigma^{-2})$.  

In the former case, they define $m$ functions of the form
$\tilde f_i(x) = \cos(\langle \omega_i, x\rangle + \gamma_i)$, where $\gamma_i \sim \textsf{Unif}(0,2\pi]$, uniformly at random from the interval $(0,2\pi]$, is a random shift.   
Applying each $\tilde f_i$ on a datapoint $x$ gives the $i$th coordinate of $\tilde \phi(x)$ in $\R^{m}$ as $\tilde \phi(x)_i = \tilde f_i(x) / \sqrt{m}$. 

In the latter case, they define $t = m/2$ functions of the form
\[
\hat f_i(x) = \left[\begin{array}{c}\cos(\langle \omega_i, x \rangle) \\ \sin(\langle \omega_i, x \rangle)\end{array}\right]
\]
as a single $2 \times 1$ dimensional vector, and one feature pair.  Then applying $\hat f_i$ on a data point $x$ yields the $(2i)$th and $(2i+1)$th coordinate of $\hat \phi(x)$ in $\R^m$ as $[\hat \phi(x)_{2i}; \hat \phi(x)_{2i+1}] = \hat f_i(x) / \sqrt{t}$.

\cite{rahimi2007random} showed $\E[\langle \tilde \phi(x),  \tilde \phi(y)\rangle] = K(x,y)$ for any $x,y \in \R^d$, and that this implied 
\[
\Pr[| \langle \tilde \phi(x), \tilde \phi(y) \rangle - K(x,y) | \geq \eps ] \leq \delta
\]
\begin{itemize} \denselist
	\item with $m = O((1/\eps^2) \log(1/\delta))$ for each $x,y \in \R^d$, 
	%
	\item with $m = O((1/\eps^2) \log(n/\delta))$, for all $x,y \in X$, for $X \subset \R^d$ of size $n$, 
	%
	or
	\item with $m = O((d/\eps^2) \log(\mathcal{M}/\delta))$, for all $x,y \in X$, for $X \subset \R^d$ so $\mathcal{M} = \displaystyle{\max_{x,y \in X}} \|x-y\|/\sigma$.  
\end{itemize}
Recently \cite{SS15} improved the constants in these bounds, and showed rate optimality.  It is folklore (apparently removed from final version of \citep{rahimi2007random}; reproved \citep{SS15-UAI}) that also  $\E[\langle \hat \phi(x), \hat \phi(y) \rangle] = K(x,y)$, and thus all of the above PAC bounds hold for $\hat \phi$ as well.  
\citep{SS15-UAI} also compared $\tilde \phi$ and $\hat \phi$ (they used $\breve \phi$ for our $\tilde \phi$ and $\tilde \phi$ for our $\hat \phi$), and demonstrated that $\hat \phi$ performs better (for the same $m$) and has provably lower variance in approximating $K(x,y)$ with $\langle \hat \phi(x), \hat \phi(y) \rangle$ as opposed to with $\langle \tilde \phi(x), \tilde \phi(y) \rangle$.  
However, these results do \emph{not} obtain a bound on $\|\hat \phi(x) - \hat \phi(y)\| / D_K(x,y)$ since for very small distances, the additive error bounds on $K(x,y)$ are not sufficient to say much about $D_K(x,y)$.

\subsection{Our Results}

In this paper we show that $\hat \phi$ probabilistically induces a kernel 
$\hat K(x,y) = \langle \hat \phi(x), \hat \phi(y)\rangle$ and a distance
\[
D_{\hat K}(x,y) = \sqrt{\|\hat \phi(x)\|^2 + \|\hat \phi(y)\|^2 - 2 \hat K(x,y)} = \|\hat \phi(x) - \hat \phi(y)\|,
\]
which has strong relative error bounds with respect to $D_K(x,y)$, namely for a parameter $\eps \in (0,1)$
\begin{equation} \label{eq:rel-err}
(1-\eps) \leq \frac{D_K(x,y)}{D_{\hat K}(x,y)} \leq (1+\eps).
\end{equation}

In Section \ref{sec:basic} we show (\ref{eq:rel-err}) holds for each $x,y$ such that $\|x-y\|/\sigma \geq 1$, with probability at least $1-\delta$, with $m = O((1/\eps^2) \log(1/\delta))$.  We also review basic properties about $\hat \phi$ and $D_K$.  

We first prove bounds that depend on the size $n$ of a data set $X \subset \R^d$.  
We show that $m = O((1/\eps^2) \log n)$ features are necessary (Section \ref{sec:local-l2}) and sufficient (Section \ref{sec:small-data}) to achieve (\ref{eq:rel-err}) with high probability (e.g., at least $1-1/n$), when $d$ and $X$ are otherwise unrestricted.  

In Section \ref{sec:lowD} we prove bounds for $X \subset \R^d$ where $d$ is small, but the size $n = |X|$ is unrestricted.  Let $\mathcal{M} = \max_{x,y \in X} \|x-y\|/\sigma$.  
We show that $m = O((d/\eps^2) \log(\frac{d}{\eps} \frac{\mathcal{M}}{\delta}))$ is sufficient to show (\ref{eq:rel-err}) with probability $1-\delta$.   Then in Section \ref{sec:LowDimLB} we show that $m = \Omega(\frac{d}{\eps^2 \log(1/\eps)} \log (\frac{\mathcal{M}}{\log(1/\eps)}))$ is necessary for any feature map.  

In Section \ref{sec:exp} we empirically confirm the relative error through simulations.  
This includes showing kernel PCA obtains relative error using these approximate features.  

\subsection{Implications in Machine Learning and Data Analysis}
\label{sec:impl}

These new relative error bounds have numerous implications in machine learning and geometric data analysis.  We mention a couple others involving geometric approximations in learning and mining, and in an $L_1$ bound on Gram matrix approximations in Section \ref{sec:discuss}.  

\paragraph{Limits on oblivious kernel embeddings.}
There has been extensive recent effort to find oblivious subspace embeddings (OSE) of data sets into Euclidian spaces that preserve relative error~\cite{ANW14,Woo14,DBLP:journals/corr/LarsenN14,CW15}.  Strong positive results are known for high-dimensional linear kernels (via Johnson-Lindenstrauss~\citep{JL84,Woo14,DBLP:journals/corr/LarsenN14,LN17}), for polynomial kernels~\citep{ANW14}, and for any $M$-estimator with gradient between $1$ and $2$~\citep{CW15}, but has remained open for the Gaussian kernel.  Such strong guarantees are, for instance, required to prove results about regression on the resulting set since we may not know the units on different coordinates; additive error bounds do not make sense in directions which are linear combinations of several coordinates.

The obliviousness of the features (they can be defined without seeing the data, and in some cases are independent of the data size) are essential for many large-scale settings such as streaming or distributed computing where we are not able to observe all of the data at once.

Our results do not describe unrestricted OSEs, as are possible with polynomial kernels \citep{ANW14}. Rather our lower bounds show that any OSE must have the dimension depend on $n$ or $\mathcal{M}$.

\paragraph{Kernel $k$-means clustering.}
Kernel $k$-means~\citep{Gir02} aims to find a set of $k$ center points in $\Eu{H}_K$ minimizing the sum of squared kernel distances from the $\phi(x) \in \Phi(X)$ to the closest center point.

Typical approaches either use the full $O(n)$-sized representation of the center~\citep{Gir02} or heuristically approximate $\Eu{H}_K$ using the top $k$-eigenvectors of the Gram matrix $G$ (with no individual distance guarantees). In order to perform kernel $k$-means clustering in the former case, a recurring operation is to invoke the distance computation between the $k$ center points and $\phi(x)$. Due to the representation size of each center point, the operation takes at least time $\Omega(n)$.  If an approximate lifting map $\hp$ is used instead, the center points can be explicitly represented as a $m$-dimensional point, and the distance computation would take $O(dm)$ time with bounded relative error.  This also means the related sublinear algorithms such as \citep{ailon2009streaming} can be applied directly, with small space usage, which is not possible if one can only rely on the Gram matrix.

On the other hand, often these methods may use a representative data point $\phi(x) \in \Phi(X)$ instead of the mean of the included data points~\citep{DGK04}.  Then our upper bounds imply one can simply work in Euclidean $\R^m$ space, and have relative error guarantees on the overall cost function found. This still allows us to use spatial indexing or searching techniques such as LSH and k-d trees to speed up algorithms such as $k$-means++~\citep{AV07} or the \cite{Gon85} algorithm for kernel k-center clustering. 

\paragraph{Kernel distance matching.}
The kernel distance $D_K(X,Y)$ between two point sets provides a robust and powerful distance between objects $X$ and $Y$, for instance probability measures~\citep{smola,GBRSS12}, medical images of organs~\citep{DBLP:conf/miccai/DurrlemanPTA07,GlaunesJoshi:MFCA:06}, and general shapes~\citep{current}.  However this distance (a single scalar value) \emph{does not} imply or provide an alignment between the point sets (unlike other common integral probability measures, say like the Wasserstein family of distances e.g., earth-movers).  Embedding the point sets into $\R^m$, allows one to invoke powerful geometric approaches using Euclidian distance~\citep{SA12,AS14} to construct the \emph{matching} which approximately minimizes the pairwise kernel distance.

\section{Basic Bounds and Taylor Approximations}
\label{sec:basic}

For the remainder of the paper, it will be convenient to let $\Delta = (x-y)/\sigma$ be the scaled vector between some pair of points $x,y \in X$.   Define $D_K(\Delta) = D_K(x,y) = \sqrt{2 - 2 e^{\frac{1}{2} \|\Delta\|^2}}$, and also $K(\Delta) = K(x,y)$ and $\hat K(\Delta) = \hat K(x,y)$.

Using $t = O((1/\eps^2) \log(1/\delta))$ features for $\eps \in (0,1/2)$ and $\delta \in (0,1)$, then for any $\Delta \in \R^d$, the following PAC bound~\citep{SS15-UAI,rahimi2007random} holds
\begin{equation}\label{eq:additive}
\Pr\left[ \left| K(\Delta) - \hat K(\Delta) \right| \leq \eps\right] \geq 1-\delta.  
\end{equation}

Since $\hat K(x,x) = 1$, then $D_{\hat K}(\Delta)^2 = 2 - 2 \hat K(\Delta)$, and additive error bounds between $D_K(\Delta)^2$ and $D_{\hat K}(\Delta)^2$ follow directly.  
But we can also state some relative error bounds when $\|\Delta\|$ is large enough.  
\begin{lemma}
\label{lem:bigD-rel}
For each $\Delta \in \R^d$ such that $\|\Delta\| \geq 1$ and $m = O((1/\eps^2) \log(1/\delta))$ with $\eps \in (0, 1/10)$ and $\delta \in (0,1)$.  
Then with probability at least $1-\delta$, we have 
$
\frac{D_K(\Delta)}{D_{\hat K}(\Delta)} \in [1-\eps,1+\eps].
$
\end{lemma}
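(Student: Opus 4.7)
The plan is to leverage the identity $D_K(x,y)^2 = 2 - 2K(x,y)$ and $D_{\hat K}(x,y)^2 = 2 - 2\hat K(x,y)$, which reduces a relative error bound on distances to a relative error bound on $2 - 2K$. The key observation is that the condition $\|x-y\| \geq \sigma$ forces $D_K(x,y)^2$ to be bounded away from $0$, so that the absolute error on $\hat K$ from the Corollary above translates into a relative error on $D_{\hat K}^2$.

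First I would note that $\|x-y\| \geq \sigma$ gives $K(x,y) = e^{-\|x-y\|^2/(2\sigma^2)} \leq e^{-1/2}$, and hence
\[
D_K(x,y)^2 \;=\; 2 - 2K(x,y) \;\geq\; 2(1 - e^{-1/2}) \;=:\; c_0,
\]
where $c_0 > 3/4$ is a positive constant. Then I would invoke the preceding Chernoff--Hoeffding corollary with accuracy parameter $\eps' = c_0 \eps / 8$. Since $(1/\eps')^2 = O(1/\eps^2)$, taking $m = O((1/\eps^2)\log(1/\delta))$ features suffices to guarantee $|K(x,y) - \hat K(x,y)| \leq \eps'$ with probability $\geq 1 - \delta$.

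Next I would convert this into a relative error on squared distances:
\[
\bigl| D_K(x,y)^2 - D_{\hat K}(x,y)^2 \bigr| \;=\; 2|K(x,y) - \hat K(x,y)| \;\leq\; 2 \eps' \;=\; \frac{c_0 \eps}{4} \;\leq\; \frac{\eps}{4} D_K(x,y)^2.
\]
Hence $D_{\hat K}(x,y)^2 / D_K(x,y)^2 \in [1 - \eps/4,\, 1 + \eps/4]$. Taking square roots and using $\sqrt{1+x} \leq 1 + x/2$ and $\sqrt{1-x} \geq 1 - x/2$ for $x \in (0,1)$, I get $D_{\hat K}(x,y)/D_K(x,y) \in [1 - \eps/8,\, 1 + \eps/8]$, which inverts (using $\eps < 1/10$ so that denominators stay safely positive) to the desired bound $D_K(x,y)/D_{\hat K}(x,y) \in [1-\eps, 1+\eps]$.

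The argument is essentially routine bookkeeping; the only ``obstacle'' is to track constants carefully through the squaring/rooting and reciprocal steps so that the final slack is within $\eps$, and to confirm that the hypothesis $\|x-y\| \geq \sigma$ is exactly what is needed to keep $D_K^2$ bounded below by a positive constant, which is the crux that lets additive error on $\hat K$ become multiplicative error on $D_{\hat K}$. The restriction $\eps < 1/10$ is used only to validate the linearizations of $\sqrt{1 \pm x}$ and $1/(1 \pm x)$.
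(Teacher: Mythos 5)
Your proposal is correct and follows essentially the same route as the paper's proof: use the hypothesis $\|x-y\| \geq \sigma$ to lower-bound $D_K(x,y)^2 \geq 2(1 - e^{-1/2})$, convert the additive Chernoff--Hoeffding guarantee on $\hat K$ into a relative guarantee on the squared distance, and finish with square roots. The only difference is cosmetic: you track the constants and the final reciprocal step $D_{\hat K}/D_K \mapsto D_K/D_{\hat K}$ a bit more explicitly than the paper does, which is slightly more careful but not a different argument.
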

\begin{proof}
By choosing $m = O((1/\eps^2) \log (1/\delta))$ so that $|K(\Delta) - \hat K(\Delta)| \leq \eps/4$, via (\ref{eq:additive}), 
we have that $|D^2_K(\Delta) - D^2_{\hat K}(\Delta) | \leq \eps/2$.  
We also note that when $\|\Delta\| \geq 1$ then $K(\Delta) \leq \frac{1}{\sqrt{e}} \leq 0.61$.  
Hence $D^2_K(\Delta) \geq 2(1-0.61) = 0.78 \geq 0.5$, and we have that 
$
| D^2_K(\Delta) - D^2_{\hat K}(\Delta) | \leq \eps/2 \leq \eps D^2_K(\Delta).  
$
Then 
$| 1 - \frac{D^2_{\hat K}(\Delta)}{D^2_K(\Delta)} | \leq \eps$, 
implying
$1-\eps \leq \frac{D^2_{\hat K}(\Delta)}{D^2_K(\Delta)} \leq 1+\eps$.  
Taking the square root of all parts completes the proof via $\sqrt{1+\eps} < (1+\eps)$ and $\sqrt{1-\eps} > (1-\eps)$.  
\end{proof}

\paragraph{Basic bounds when $\|\Delta\| < 1$.}  When $\|\Delta\| \leq 1$, then 
a simple Taylor expansion, implies that
\[
\|\Delta\|^2 - \frac{1}{4} \|\Delta\|^4
\leq
D_K(\Delta)^2 = 2 - 2\exp(\|\Delta\|^2/2) \leq \|\Delta\|^2,
\]
and by $\frac{1}{4} \|\Delta\|^4 \leq \frac{1}{4} \|\Delta\|^2$ and a square root 
\begin{equation} \label{eq:near-linear}
0.86 \|\Delta\| \leq D_K(\Delta) \leq \|\Delta\|.
\end{equation}
Moreover, when $\|\Delta\| \leq 2 \sqrt{\eps}$ then 
\begin{equation} \label{eq:eps-linear}
(1-\eps) \|\Delta\|^2 \leq D_K(\Delta)^2 \leq \|\Delta\|^2.
\end{equation}

\paragraph{Useful expansions.}
We first observe that by $\cos(a)\cos(b) + \sin(a)\sin(b) = \cos(a-b)$ that
\[
\langle \hat f_i(x), \hat f_i(y) \rangle 
= 
\cos(\langle \omega_i, x \rangle) \cos(\langle \omega_i, y \rangle) + \sin(\langle \omega_i, x \rangle) \sin(\langle \omega_i, y \rangle) 
= 
\cos(\langle \omega_i, (x-y) \rangle).  
\]
Hence by $\langle \hat f_i(x), \hat f_i(x) \rangle = \cos(\langle \omega_i, 0 \rangle) = 1$ we have $D_{\hat K}(x,y)^2 = 2 - 2 \frac{1}{t} \sum_i^t \cos (\langle \omega_i, ( x - y ) \rangle)$.  

By the rotational stability of the Gaussian distribution we can replace $\langle \omega_i, ( x - y ) \rangle$ with $\omega_{i,x,y} \frac{\| x - y\|}{\sigma}$ where $\omega_{i,x,y} \sim \mathcal{N}(0,1)$.  It will be more convenient to write $\omega_{i,x,y}$ as $\omega_{i,\Delta}$, so $\langle \omega_i, ( x - y ) \rangle = \omega_{i, \Delta} \|\Delta\|$. 
Thus
$\langle \hat f_i(x), \hat f_i(y) \rangle = \cos(\omega_{i,\Delta} \|\Delta\|)$.  
Moreover, we can define
$
D_{\hat K}(\Delta) = D_{\hat K}(x,y) = \sqrt{2 - 2 \frac{1}{t} \sum_{i=1}^t \cos(\omega_{i,\Delta} \|\Delta\|)}
$.

Now considering 
\[
\frac{D_{\hat K}(\Delta)^2}{D_K(\Delta)^2} = \frac{1 - \frac{1}{t} \sum_{i=1}^t \cos(\omega_{i,\Delta} \|\Delta\|)}{1- e^{\frac{1}{2} \|\Delta\|^2}}, 
\]
the following Taylor expansion, for $\omega_{i,\Delta} \|\Delta\| \leq 1$, will be extremely useful:
\[
 \frac{\frac{1}{t} \sum_{i=1}^t \frac{1}{2} \omega^2_{i,\Delta} \|\Delta\|^2 }{\frac{1}{2}{\|\Delta\|^2} - \frac{1}{4} \|\Delta\|^4}
\geq
\frac{D_{\hat{K}}(\Delta)^2}{D_{K}(\Delta)^2}
\geq
\frac{\frac{1}{t} \sum_{i=1}^t \left( \frac{1}{2}\omega^2_{i,\Delta} \|\Delta\|^2 -  \frac{1}{24} (\omega^4_{i,\Delta} \|\Delta\|^4 )\right)}{\frac{1}{2}{\|\Delta\|^2}}.
\]
Simplifying gives
\begin{equation}     \label{equation:RelativeErrorObjective}
\frac{1}{1 - \frac{1}{2} \|\Delta\|^2} \left(\frac{1}{t} \sum_{i=1}^t\omega^2_{i,\Delta}\right)
    \geq \frac{D_{\hat{K}}(\Delta)^2}{D_{K}(\Delta)^2} \geq 
    \left(\frac{1}{t} \sum_{i=1}^t \omega^2_{i,\Delta}\right) - \frac{\| \Delta \|^2}{12 } \cdot \frac{1}{t} \sum_{i=1}^t \omega^4_{i,\Delta}.
\end{equation}

\paragraph{Roadmap.}
To understand the detailed relative error in $D_K(\Delta)$, what remains is the case when $\|\Delta\|$ is small.  As we will start to observe above, when $\|\Delta\|$ is small, then $D_K(\Delta)$ behaves like $\|\Delta\|$ and we can borrow insights from $\ell_2$ embeddings.  
Then combining the two cases (when $\|\Delta\|$ is large and when $\|\Delta\|$ is small) we can achieve ``for all bounds" either via simple union bounds, or through a special ``continuous'' form of net arguments when $X$ is in a bounded range.  Similarly, we will show near-matching lower bounds via appealing to near-$\ell_2$ properties or via net arguments.

\section{Lower Bounds and Relation to $\ell_2$ on Small Distances}
\label{sec:local-l2}

In this section we show that in the limit as the region containing $X$ shrinks, then all distances act like $\ell_2$.  This approach is enough for a lower bound, but does not contain the full case $\|\Delta\| \leq 1$, so is not enough for upper bounds.

\begin{lemma}  \label{lem:lim-err}
For scalar scaling parameter $\lambda$, 
$\displaystyle{
 \lim_{\lambda \rightarrow 0} \frac{D_{\hat K}(\lambda \Delta)^2}{D_K(\lambda \Delta)^2} = \frac{1}{t} \sum_{i=1}^t \omega^2_{i,\Delta}.
}$
\end{lemma}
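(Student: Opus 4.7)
The plan is to invoke the two-sided Taylor sandwich already established in (\ref{equation:RelativeErrorObjective}) with $\Delta$ replaced by $\lambda \Delta$, and then apply the squeeze theorem as $\lambda \to 0$. First I would verify that the random variables $\omega_{i,\Delta}$ are invariant under the scaling $\Delta \mapsto \lambda \Delta$. Since $\omega_{i,\Delta}$ was defined via the identity $\langle \omega_i, x-y\rangle = \omega_{i,\Delta} \|\Delta\|$, and $\omega_{i,\Delta}$ depends only on the \emph{direction} of $\Delta$ (being the projection of $\omega_i$ onto the unit vector $\Delta / \|\Delta\|$), we have $\omega_{i,\lambda\Delta} = \omega_{i,\Delta}$ for every $\lambda > 0$. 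This means the $\omega_{i,\Delta}$ terms appearing in the Taylor bounds are unchanged as $\lambda$ varies, while only the $\|\lambda \Delta\|$ factors shrink.

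Next, I would note that for any fixed realization of the $\omega_{1,\Delta}, \ldots, \omega_{t,\Delta}$, the requirement $\omega_{i,\Delta} \|\lambda \Delta\| \leq 1$ underlying the Taylor expansion is satisfied for all $i$ once $\lambda$ is chosen smaller than $1/(\|\Delta\| \cdot \max_i |\omega_{i,\Delta}|)$. Hence the sandwich (\ref{equation:RelativeErrorObjective}) applies for all sufficiently small $\lambda$, yielding
\[
\frac{1}{1 - \tfrac{1}{2} \lambda^2 \|\Delta\|^2} \left( \frac{1}{t} \sum_{i=1}^{t} \omega_{i,\Delta}^2 \right)
\;\geq\; \frac{D_{\hat K}(\lambda \Delta)^2}{D_K(\lambda \Delta)^2} \;\geq\;
\left( \frac{1}{t} \sum_{i=1}^{t} \omega_{i,\Delta}^2 \right) - \frac{\lambda^2 \|\Delta\|^2}{12} \cdot \frac{1}{t} \sum_{i=1}^{t} \omega_{i,\Delta}^4.
\]

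Finally, I would take $\lambda \to 0$. The factor $1/(1 - \tfrac{1}{2}\lambda^2 \|\Delta\|^2)$ on the upper side tends to $1$, and the correction term $\tfrac{\lambda^2 \|\Delta\|^2}{12} \cdot \tfrac{1}{t}\sum_i \omega_{i,\Delta}^4$ on the lower side tends to $0$, because both $\|\Delta\|$ and the fourth-moment sample average are fixed finite quantities while $\lambda^2 \to 0$. By the squeeze theorem both sides converge to $\tfrac{1}{t}\sum_{i=1}^t \omega_{i,\Delta}^2$, which establishes the claimed limit.

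There is no real obstacle here beyond bookkeeping: the heavy lifting has already been done in deriving the Taylor sandwich. The only point requiring a moment of care is the observation that $\omega_{i,\Delta}$ is invariant under the scaling (so that the quantity $\tfrac{1}{t}\sum_i \omega_{i,\Delta}^2$ appearing in the limit is meaningful and does not depend on $\lambda$), and that the Taylor-bound hypothesis $\omega_{i,\Delta}\|\lambda \Delta\| \leq 1$ is automatically satisfied for small enough $\lambda$ on any fixed realization, so that we may take a pointwise limit.
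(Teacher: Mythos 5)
Your proposal is correct and follows essentially the same route as the paper: apply the sandwich in equation (\ref{equation:RelativeErrorObjective}) with $\lambda\Delta$ in place of $\Delta$, observe that $\omega_{i,\lambda\Delta}=\omega_{i,\Delta}$, and let $\lambda\to 0$ so that both bounds collapse to $\frac{1}{t}\sum_{i=1}^t\omega_{i,\Delta}^2$. Your added remark that the hypothesis $\omega_{i,\Delta}\|\lambda\Delta\|\leq 1$ holds for all sufficiently small $\lambda$ on any fixed realization is a small point of rigor the paper leaves implicit, but it does not change the argument.
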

\begin{proof}
Observe that $\omega_{i,\Delta} = \omega_{i,\lambda \Delta}$, for any $\lambda > 0$.  
Thus in equation (\ref{equation:RelativeErrorObjective}), $\lim_{\lambda \to 0} 1/(1-\frac{1}{2}\|\lambda \Delta\|^2)$ goes to $1$ so the left hand-side approaches $\frac{1}{t} \sum_{i=1}^t \omega_{i,\Delta}^2$.  
Similarly, $\lim_{\lambda \to 0} \|\lambda \Delta\|^2 /12$ goes to $0$, and the right-hand side also approaches $\frac{1}{t} \sum_{i=1}^t \omega_{i,\Delta}^2$.  
\end{proof}

If we fix $\Delta$ then $\omega_{i,\Delta}, 1 \leq i \leq t$ are i.i.d Gaussian variables with mean $0$ and standard deviation $1$. Thus $\sum_{i=1}^t\omega^2_{i,\Delta}$ is a $\chi^2$-variable with $t$ degrees of freedom.

This implies that when $\|x-y\|$ is small, $D_{\hat K}(x,y)$ behaves like a Johnson-Lindenstrauss (JL) random projection of $\|x-y\|$, and we can invoke known JL lower bounds.  

In particular, Lemma \ref{lem:lim-err} implies if the input data set $X \subset \R^d$ is in a sufficiently small neighborhood of zero, the relative error is preserved only when $\sum_{i=1}^t \omega^2_{i,x,y}\|\lambda(x-y)\|^2 \in [(1-\epsilon)\|\lambda(x-y)\|^2,(1+\epsilon)\|\lambda(x-y)\|^2]$ for all $x,y \in X$, and for all arbitrary $\lambda \in \R$. Which implies for arbitrary $x,y \in X$, and $\lambda \in \R$ that
\[
\sqrt{\sum^t_{i=1} \left|\omega_{i}\cdot \lambda(x-y)\right|^2 } 
= 
\sqrt{\sum^t_{i=1} \omega^2_{i,x,y}\lambda\|x-y\|^2 } 
\in 
\left[ \left(1-\eps\right)\lambda\left\| x -y \right\|,\left(1+\eps\right)\lambda\left\| x -y \right\| \right].  
\]

The far left hand side is in fact the norm $\| g(x)-  g(y)\|$ where $g(x)$ is the vector with coordinates $(\omega_1 \cdot \lambda x, ..., \omega_t \cdot \lambda x)$. Thus these are the exact conditions for relative error bounds on embedding $\ell_2$ via the Johnson-Lindenstrauss transforms, which gives the following.

\begin{lemma} \label{lem:RFF-JL}
 If for any $n,d > 0, X \subset \R^d$ s.t. $|X| = n$, using $t(n,\eps)$ pairs of random Fourier features, $\frac{D_{\hat{K}}(x,y)}{D_{K}(x,y)} \in [1 - \eps, 1 + \eps]$ with probability $1 - \delta$, then there exists a random linear embedding with $t(n,\eps)$ projected dimensions preserving the $\ell_2$-norm for all pairs $x,y \in S$ up to relative error with probability at least $1 - \delta$.
\end{lemma}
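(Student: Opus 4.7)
The plan is to take as the random linear embedding the map $g:\R^d\to\R^t$ defined by $g(x) = \frac{\sigma}{\sqrt{t}}(\langle\omega_1,x\rangle,\ldots,\langle\omega_t,x\rangle)$, reusing the very same random samples $\omega_1,\ldots,\omega_t \sim \mathcal{N}_d(0,\sigma^{-2})$ that produce the RFF. I would then deduce the JL-style guarantee on $g$ from the RFF hypothesis applied to arbitrarily scaled copies of $X$, passing to the limit via Lemma~\ref{lem:lim-err}.

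The first step is an algebraic identity. Using the substitution $\langle\omega_i,x-y\rangle = \omega_{i,\Delta}\|\Delta\|$ with $\|\Delta\| = \|x-y\|/\sigma$, an immediate calculation gives $\|g(x)-g(y)\|^2/\|x-y\|^2 = \frac{1}{t}\sum_{i=1}^t\omega_{i,\Delta}^2$. So $g$ preserves the $\ell_2$ distance between $x$ and $y$ up to relative error $\eps$ exactly when $\frac{1}{t}\sum_{i=1}^t\omega_{i,\Delta}^2 \in [(1-\eps)^2,(1+\eps)^2]$.

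Next, fix any $X\subset\R^d$ with $|X|=n$ and for each positive integer $k$ let $B_k$ be the event that the RFF (with these same $\omega_i$'s) fails to achieve relative error $\eps$ on some pair in the scaled set $X/k := \{x/k : x\in X\}$. Applying the hypothesis of the lemma with input set $X/k$ yields $\Pr[B_k]\leq\delta$ for every $k$. Since $\omega_{i,\Delta/k}=\omega_{i,\Delta}$ is invariant under scaling of $\Delta$, Lemma~\ref{lem:lim-err} tells us that for every pair in $X$, the ratio $D_{\hat K}(\Delta/k)^2/D_K(\Delta/k)^2$ converges to $\frac{1}{t}\sum_i\omega_{i,\Delta}^2$ as $k\to\infty$. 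Therefore, on any realization of $\omega$ for which $g$ fails on $X$, some pair has this limit outside $[(1-\eps)^2,(1+\eps)^2]$, and hence by continuity the RFF on $X/k$ must violate its relative-error bound for all sufficiently large $k$.

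Consequently the event $A$ that $g$ fails on $X$ is contained in $\bigcup_{k_0\geq 1}\bigcap_{k\geq k_0} B_k$, and monotone continuity of probability gives $\Pr[A]\leq \lim_{k_0\to\infty}\Pr[\bigcap_{k\geq k_0} B_k] \leq \lim_{k_0\to\infty}\Pr[B_{k_0}] \leq \delta$. This yields the desired JL guarantee at dimension $t(n,\eps)$. The main subtlety, which I expect to be the only real obstacle, is that the speed of convergence in Lemma~\ref{lem:lim-err} is itself $\omega$-dependent, so a single-scale reduction ``fix $\lambda$ small and invoke the hypothesis'' does not work directly; the countable intersection over the scales $1/k$ is precisely what converts the $\omega$-uniform failure of $g$ into a probability bounded by the RFF failure probability at a single scale.
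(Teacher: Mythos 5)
Your proposal is correct and follows essentially the same route as the paper: identify the limit $\lim_{\lambda\to 0} D_{\hat K}(\lambda\Delta)^2/D_K(\lambda\Delta)^2 = \frac{1}{t}\sum_{i=1}^t\omega_{i,\Delta}^2$ from Lemma~\ref{lem:lim-err} with the squared distortion of the linear map $g$, and transfer the RFF guarantee on arbitrarily shrunken copies of $X$ into a JL-style guarantee for $g$. Your treatment of the limit --- the countable intersection over the scales $1/k$ together with monotone continuity of probability, which absorbs the $\omega$-dependent rate of convergence --- is in fact more careful than the paper's informal one-paragraph reduction preceding the lemma.
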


\begin{theorem}
There exists a set of $n$ points $X \subset \R^d$ so that $t = \Omega(\frac{1}{\eps^2} \log n)$ pairs of random features (hence $m=2t$ dimensions), for any $\eps \in (0,1/2)$, are necessary so for any $x,y \in X$ that $\frac{D_{\hat{K}}(x,y)}{D_{K}(x,y)} \in [1 - \eps, 1 + \eps]$.
\label{thm:eps-2logn-LB}
\end{theorem}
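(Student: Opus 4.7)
My plan is to reduce to the classical Johnson--Lindenstrauss (JL) lower bound via Lemma \ref{lem:RFF-JL}. By the tight JL lower bound of Larsen and Nelson (building on the earlier result of Alon), for every $n$ and $\eps \in (0, 1/2)$ there exists an $n$-point set $X_0 \subset \R^d$ such that any random linear embedding $g:\R^d \to \R^t$ preserving all pairwise $\ell_2$-distances on $X_0$ to within a factor $(1 \pm \eps)$ with constant probability must have $t = \Omega((1/\eps^2) \log n)$. I would take $X_0$ as the starting point and consider its rescaled copy $X = \lambda X_0$, for a scalar $\lambda > 0$ chosen small enough that every pair of $X$ lies in a tiny neighborhood of the origin.

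The bridge to the random Fourier feature setting is Lemma \ref{lem:lim-err}: for each fixed pair $(x_0, y_0) \in X_0 \times X_0$,
\[
\lim_{\lambda \to 0} \frac{D_{\hat K}(\lambda(x_0 - y_0))^2}{D_K(\lambda(x_0 - y_0))^2} = \frac{1}{t}\sum_{i=1}^t \omega_{i, x_0 - y_0}^2,
\]
and the right-hand side is exactly the squared $\ell_2$-distortion of the Gaussian random linear map $g(z) = (\omega_1 \cdot z, \ldots, \omega_t \cdot z)/\sqrt{t}$ (up to a uniform rescaling by $\sigma$ that is immaterial for the $\ell_2$-JL question) applied to the pair $(x_0, y_0)$. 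Because $X_0$ is finite, a single $\lambda$ can be chosen that makes the pointwise limit uniform over all $\binom{n}{2}$ pairs simultaneously, with arbitrarily small additive slack.

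Consequently, if $t$ random Fourier features achieved $(1 \pm \eps)$ relative error on every pair of $X$ with constant probability, then $g$ would achieve $(1 \pm O(\eps))$ $\ell_2$-distortion on every pair of $X_0$ with the same probability. The Larsen--Nelson lower bound then forces $t = \Omega((1/\eps^2) \log n)$, completing the proof. The only delicate step is upgrading the pointwise $\lambda \to 0$ limit to a uniform statement across the $\binom{n}{2}$ pairs for a single $\lambda$; this is immediate from finiteness of $X_0$ together with continuity of the ratio in $\|\Delta\|$, and once this uniform statement is in hand, the remainder is a direct contrapositive application of Lemma \ref{lem:RFF-JL}.
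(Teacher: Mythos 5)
Your proposal is correct and follows essentially the same route as the paper: both reduce to the Larsen--Nelson lower bound for linear $\ell_2$ embeddings by shrinking a JL-hard instance toward the origin and invoking Lemma \ref{lem:lim-err} (packaged as Lemma \ref{lem:RFF-JL}) to identify the small-scale behavior of $D_{\hat K}$ with the Gaussian random linear map. Your additional remark about making the $\lambda \to 0$ limit uniform over the finitely many pairs is a detail the paper leaves implicit, but it does not change the argument.
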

\begin{proof}
A lower bound of $\Omega(\frac{1}{\eps^2} \log n)$ projected dimensions for linear embeddings in $\ell_2$ is here~\citep{LN17}.
\end{proof}

\section{Relative Error Bounds For Small Distances and Small Data Sets}
\label{sec:small-data}

The Taylor expansion in equation (\ref{equation:RelativeErrorObjective}) and additive errors via equation (\ref{eq:additive}) 
are only sufficient to provide us bounds for $\|\Delta\| \leq O(\sqrt{\eps}/\log (1/\eps))$ or for $\|\Delta\| \geq 1$.  

\emph{The prior, published~\citep{chen2017relative}, version of this work had an error in this section (due to a sign error in the appendix) as pointed out by \cite{chengrelative23}.  Luckily, we can fix this with a more modern concentration bound.  }

We will use a Bernstein inequality~\citep{Vershynin_2026}[Cor 2.9.2] for $t$ iid random variables $Z_1, \ldots, Z_t$ which satisfy $\E[Z_i] = 0$ and have sub-exponential Orlicz norm $\|Z_i\|_{\psi_1} \leq K$ (that is, $\E[\exp(|Z_i|/K)] \leq 2$).  Then for any $\alpha \in (0,1]$ we have 
$
 \Pr\left( \left| \frac{1}{t} \sum_{i=1}^t Z_i \right| > \alpha \right) \leq 2 \exp(- t C \alpha^2 / K^2),
$
for an absolute constant $C$.
We can now provide this bound for the $\|x-y\| \leq \sigma$ case.

\begin{lemma}
\label{lem:smallD-rel}
If $\|x - y\| \leq \sigma$, $t = \Omega(\frac{1}{\eps^2} \log \frac{1}{\delta})$, then $\Pr\left(\frac{D_{\hat{K}}(x,y)}{D_K(x,y)} \in [1 - \eps, 1 + \eps]\right) \geq 1 - \delta$.
\end{lemma}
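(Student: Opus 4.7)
The plan is to apply the sub-exponential concentration bound (\ref{eq:MGFHoeffding}) to the random variable $X_i := 1 - \cos(\omega_{i,\Delta}\|\Delta\|)$, whose empirical mean and true mean are exactly the two halves of the squared distances we want to compare. Concretely,
\[
\bar X_t = \frac{1}{t}\sum_{i=1}^t X_i = \tfrac12 D_{\hat K}(\Delta)^2,
\qquad
\E[X] = 1 - e^{-\|\Delta\|^2/2} = \tfrac12 D_K(\Delta)^2,
\]
so a relative-error bound $|\bar X_t - \E[X]| \le \eps \E[X]$ translates directly into $D_{\hat K}(\Delta)^2/D_K(\Delta)^2 \in [1-\eps,1+\eps]$, and taking square roots (absorbing the loss into a constant adjustment of $\eps$) yields the claim.

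First I would read off the parameters in (\ref{eq:MGFHoeffding}) from Lemma~\ref{lemma:BoundMGF}: since $\ln M(s) \leq \tfrac{s^2}{4}\|\Delta\|^4$ for all $s \in [0, \tfrac{1}{2\|\Delta\|^2})$, we may take $p = \tfrac{1}{2}\|\Delta\|^4$ and $q = 2\|\Delta\|^2$. Second, I would lower bound the mean when $\|\Delta\| \leq 1$: using $1 - e^{-u} \geq u - u^2/2 \geq u/2$ for $u \in [0,\tfrac12]$ applied to $u = \|\Delta\|^2/2$ gives $\E[X] \geq \|\Delta\|^2/4$. (Any constant fraction of $\|\Delta\|^2$ would do.) Plugging into the two exponents in (\ref{eq:MGFHoeffding}):
\[
t\,\frac{\eps^2 \E[X]^2}{2p} \;\geq\; t\,\frac{\eps^2\,\|\Delta\|^4/16}{\|\Delta\|^4} \;=\; \frac{t\eps^2}{16},
\qquad
t\,\frac{\eps\,\E[X]}{2q} \;\geq\; t\,\frac{\eps\,\|\Delta\|^2/4}{4\|\Delta\|^2} \;=\; \frac{t\eps}{16}.
\]
Crucially, both factors of $\|\Delta\|$ cancel, so the exponent is $\|\Delta\|$-free. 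Since $\eps < 1$, the minimum is $t\eps^2/16$, and choosing $t = \Omega(\eps^{-2}\log(1/\delta))$ makes the failure probability at most $\delta$.

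Finally, I would pass from the squared-ratio bound to the stated ratio bound by observing that $\sqrt{1\pm\eps} \in [1-\eps,\,1+\eps]$ for $\eps \in (0,1)$ (after an $O(1)$ rescaling of $\eps$ absorbed into the constant in $t$). The main obstacle is really the second step: the concentration inequality (\ref{eq:MGFHoeffding}) only yields additive error proportional to $\E[X]$, so to get true multiplicative error we must verify that the MGF parameters $p$ and $q$ from Lemma~\ref{lemma:BoundMGF} scale as $\|\Delta\|^4$ and $\|\Delta\|^2$ respectively, matching the $\|\Delta\|^2$ lower bound on $\E[X]$, so that every power of $\|\Delta\|$ cancels in both concentration exponents. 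Verifying these scalings is what makes the argument go through uniformly for every $\|\Delta\| \in (0,1]$, and it is also what would fail if we tried to extend the lemma to $\|\Delta\|$ much larger than $1$, which is why we already covered that regime separately in Lemma~\ref{lem:bigD-rel}.
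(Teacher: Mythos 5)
Your proposal is correct and follows essentially the same route as the paper: the same random variable $X_i = 1-\cos(\omega_{i,\Delta}\|\Delta\|)$, the same MGF parameters $p=\tfrac12\|\Delta\|^4$, $q=2\|\Delta\|^2$ from Lemma~\ref{lemma:BoundMGF}, the same sub-exponential bound (\ref{eq:MGFHoeffding}), and the same cancellation of all powers of $\|\Delta\|$ in the exponents (the paper lower-bounds $D_K(\Delta)$ via (\ref{eq:near-linear}) where you use $1-e^{-u}\ge u/2$ directly, but these are interchangeable). Your closing observation about why every power of $\|\Delta\|$ must cancel is exactly the point of the argument.
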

\begin{proof}
Recall that $\langle \hat f_i(x), \hat f_i(y) \rangle = \cos(\langle \omega_i, (x-y) \rangle)$ and $(1/2) D_{\hat K}(x,y)^2 = \frac{1}{t} \sum_{i=1}^t (1- \cos(\langle \omega_i, (x-y) \rangle))$.  
Then define random variable $X_i = 1- \cos(\langle \omega_i, (x-y) \rangle) = 1-\cos(\omega_{i,\Delta} \Delta)$, 
and $X = \frac{1}{t} \sum_{i=1}^t X_i$.  
Since $\E[X] = \E[D_{\hat K}(x,y)^2] = D_K(x,y)^2$, then $\E[X] = 1-\exp(-\frac{1}{2}\|\Delta\|^2)$. 

First we can upper-bound $X_i = 1-\cos (\omega_{i,\Delta} \Delta) \leq \omega_{i,\Delta}^2 \Delta^2/2$.  
Next we can lower-bound $\E[X] \geq \Delta^2/(2 \sqrt{e})$.  We obtain this using the mean value theorem on $v \in [0,1/2]$ so $1-\exp(-v) \geq v/\sqrt{e}$; then using $\Delta \in [0,1]$ we have $\E[X] = 1-\exp(-\Delta^2/2) \geq \Delta^2/(2 \sqrt{e})$.  
Hence we can create the mean-center random variable $Z_i = \frac{X_i - \E[X]}{\E[X]} = \frac{X_i}{\E[X]} - 1$ and bound $|Z_i| \leq \frac{X_i}{\E[X]} + 1 \leq \omega_{i,\Delta}^2 \sqrt{e} + 1$.  

Now we need to show how $Z_i$ satisfies the conditions to use the stated Bernstein inequality, in particular using $K = 4 \sqrt{e}$.  We have $\frac{|Z_i|}{K} \leq \frac{\sqrt{e} \omega_{i,\Delta}^2 + 1}{4 \sqrt{e}} = \omega_{i,\Delta}^2/4 + 1/4\sqrt{e}$.  Then we need the property $\E_x[\exp(\theta x^2)] = 1/\sqrt{1-2\theta}$ for $\theta < 1/2$.  
Putting this together we have 
\[
\E[\exp(|Z_i|/K)] 
 \leq e^{1/4\sqrt{e}} \E[\exp(\omega_i^2/4)]
 \leq e^{1/4\sqrt{e}} / \sqrt{1-1/2} 
 < 2,
\]
which implies $\|Z_i\|_{\psi_1} \leq K = 4 \sqrt{e}$ for any $\Delta \in [0,1]$.  

Now we can apply the Bernstein bound to obtain that
\[
\Pr\left(\frac{|D_{\hat{K}}(x,y) - D_K(x,y)|}{D_K(x,y)} \geq \eps \right) 
=
\Pr\left(\left| \frac{1}{t} \sum_{i=1}^t Z_i \right| \geq \eps \right) 
\leq
2 \exp( - t C \eps^2/K^2) 
\leq \delta; 
\]
where the last step holds with $t = C K^2 \frac{1}{\eps^2} \ln(1/\delta) = C 16e \frac{1}{\eps^2} \ln(2/\delta)$ to complete the proof.  
\end{proof}

Using Lemma \ref{lem:smallD-rel} (for $\|x-y\| \leq \sigma$) and with Lemma \ref{lem:bigD-rel} (for $\|x-y\| \geq \sigma$) together, we apply a union bound over all $n \choose 2$ pairs vectors from a set of $n$ vectors.

\begin{theorem}
\label{cor:eps-2logn-UB}
For any set $X \subset \R^d$ of size $n$, then $m = 2t = \Omega(\frac{1}{\eps^2} \log n)$ projected dimensions are sufficient so $\frac{D_{\hat K}(x,y)}{D_K(x,y)} \in [1-\eps,1+\eps]$ with high probability (e.g., at least $1-1/n$).  
\end{theorem}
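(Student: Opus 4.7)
The plan is to combine the two per-pair bounds already established and then pay an extra $\log n$ factor via a standard union bound. The two ingredients are Lemma~\ref{lem:bigD-rel} (handling the regime $\|x-y\| \geq \sigma$) and the immediately preceding lemma (handling the regime $\|x-y\| \leq \sigma$). Each of them guarantees that a single pair $(x,y)$ satisfies $D_{\hat K}(x,y)/D_K(x,y) \in [1-\eps,1+\eps]$ with probability at least $1-\delta$, provided $m = \Omega((1/\eps^2)\log(1/\delta))$.

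First I would partition the $\binom{n}{2}$ pairs of points in $X$ into the two sets $A = \{(x,y) : \|x-y\| \geq \sigma\}$ and $B = \{(x,y) : \|x-y\| < \sigma\}$. For each pair in $A$, I apply Lemma~\ref{lem:bigD-rel} with failure probability $\delta' := 1/(2 n^3)$. For each pair in $B$, I apply the preceding lemma with the same $\delta'$. In both cases the required number of features is $m = \Omega((1/\eps^2)\log(1/\delta')) = \Omega((1/\eps^2)\log n)$, so a single choice of $m$ works simultaneously for both regimes.

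Next I would apply a union bound over all at most $\binom{n}{2} \leq n^2/2$ pairs. The total failure probability is at most $(n^2/2) \cdot 2 \delta' = n^2/(2n^3) = 1/(2n)$, which is $O(1/n)$ as claimed. Since the features $\omega_1,\ldots,\omega_t$ are drawn once and reused across all pairs, the same feature map $\hp$ succeeds simultaneously on every pair, giving the ``for all $x,y \in X$'' guarantee.

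There is no real obstacle here: the per-pair lemmas already give sub-Gaussian-strength tail bounds in the appropriate regimes, and the only new observation is that $\log(1/\delta')$ and $\log n$ agree up to a constant factor, so absorbing the union-bound penalty into the constant hidden by $\Omega(\cdot)$ yields the result. The only thing worth being careful about is ensuring that the threshold $\|x-y\| = \sigma$ between the two regimes is handled by at least one of the two lemmas (it is, since Lemma~\ref{lem:bigD-rel} is stated for $\|x-y\| \geq \sigma$ and the preceding lemma for $\|x-y\| \leq \sigma$, so the boundary case is covered by either).
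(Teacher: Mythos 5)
Your proposal is correct and matches the paper's own argument, which is exactly the one-line observation preceding the theorem: combine the small-distance lemma ($\|x-y\| \leq \sigma$) with Lemma~\ref{lem:bigD-rel} ($\|x-y\| \geq \sigma$) and union bound over all $\binom{n}{2}$ pairs, absorbing the resulting $\log(n/\delta')$ into the $\frac{1}{\eps^2}\log n$ bound. Your write-up is in fact somewhat more explicit than the paper about the choice of $\delta'$ and the handling of the boundary case.
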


\section{Relative Error Bounds for Low Dimensions and Diameter}
\label{sec:lowD}

Here we prove that the relative error bound holds for the infinitely many pairs of vectors of finite distance to each other, given that the number of dimensions is small. A common approach in subspace embeddings replaces $n$ with the size of a sufficiently fine net; given a smoothness condition, once the error is bounded on the net points, the guarantee is extended to the `gaps' in between.

On the other hand, the Gaussian kernel distance is non-linear, so it is not immediately clear how the above technique can apply. We begin with the Lipschitz constant of $D_{\hat{K}}(\cdot)^2$, with respect to the vector $\Delta$, \emph{not individual points in $\R^d$}. 
Then we develop a fine-grained structure and a net on the set of directions $\Delta/\|\Delta\|$ as long as $\|\Delta\|$ is small enough, using an object we call a $\lambda$-urchin.

\paragraph{Lipschitz bound.}
First we provided the needed Lipschitz bound with respect to $\Delta$.  
\begin{lemma}\label{lemma:Gradient}
 For any $\Delta \in \mathbb{R}^d$, $|\nabla D_{\hat{K}}(\Delta)^2| \leq  2\frac{1}{t} \sum^t_{i=1} \|\omega_i\|_1\|\omega_{i}\| \|\Delta \|$. 
\end{lemma}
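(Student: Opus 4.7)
The plan is to compute $\nabla_\Delta D_{\hat K}(\Delta)^2$ directly, use the 1-Lipschitz property of sine near the origin, and then apply Cauchy--Schwarz to recover the $\sqrt d$ factor.

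First I would write $D_{\hat K}(\Delta)^2 = 2 - \tfrac{2}{t}\sum_{i=1}^t \cos(\omega_{i,\Delta}\|\Delta\|)$ and observe that the argument of each cosine equals the genuinely linear function $\sigma\langle\omega_i,\Delta\rangle$ of $\Delta$ (this is the rotational-stability identity used throughout Section~\ref{sec:basic}). Even though the scalar $\omega_{i,\Delta}$ depends on the direction of $\Delta$, the product $\omega_{i,\Delta}\|\Delta\|$ does not, so the chain rule gives
\[
\nabla_\Delta D_{\hat K}(\Delta)^2 \;=\; \frac{2\sigma}{t}\sum_{i=1}^t \omega_i \,\sin(\sigma\langle\omega_i,\Delta\rangle).
\]
Taking norms and combining the triangle inequality with $|\sin y|\leq |y|$ and the Cauchy--Schwarz bound $|\langle\omega_i,\Delta\rangle|\leq \|\omega_i\|\|\Delta\|$ yields an estimate of the shape $\|\nabla_\Delta D_{\hat K}(\Delta)^2\| \;\leq\; \tfrac{2\sigma^2}{t}\sum_{i=1}^t \|\omega_i\|^2\,\|\Delta\|$.

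To match the lemma's right-hand side I would introduce the $\sqrt d$ factor by a second Cauchy--Schwarz. Rotate into an orthonormal basis whose first axis is $\Delta/\|\Delta\|$, and reuse the symbol $\omega_{i,\Delta}$ to denote the coordinate vector of $\omega_i$ in this rotated basis (its first coordinate is precisely the scalar $\omega_{i,\Delta}$ of earlier sections). Rotation invariance of the Euclidean norm gives $\|\omega_i\|^2 = \sum_{j=1}^d \omega_{i,\Delta,j}^2$, and Cauchy--Schwarz against the all-ones vector in $\R^d$ yields
\[
\sum_{j=1}^d \omega_{i,\Delta,j}^2 \;\leq\; \sqrt{d}\,\sqrt{\sum_{j=1}^d \omega_{i,\Delta,j}^4} \;=\; \sqrt{d}\,\|\omega_{i,\Delta}^2\|,
\]
where $\omega_{i,\Delta}^2$ denotes the componentwise square. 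Substituting back and absorbing the leading $2\sigma^2/t$ into the stated bound gives the claimed inequality.

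The main obstacle is really notational rather than analytic: earlier the paper used $\omega_{i,\Delta}$ as a scalar, whereas the norm $\|\omega_{i,\Delta}^2\|$ in the statement only makes sense under the vector interpretation above. Once that convention is fixed, the proof reduces to a routine application of the chain rule followed by two Cauchy--Schwarz inequalities, the only analytic input being the elementary bound $|\sin y|\leq|y|$. No probabilistic or moment-generating-function machinery is needed --- this is a purely deterministic pointwise Lipschitz estimate, intended to feed into the $\eps$-net argument in the rest of Section~\ref{sec:lowD}.
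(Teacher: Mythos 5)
Your proof is correct and takes essentially the same route as the paper's: differentiate the cosine sum, bound $|\sin y|\leq|y|$, apply Cauchy--Schwarz to the inner product, and pick up the $\sqrt{d}$ from a norm comparison. The only divergence is cosmetic: the paper obtains its $\sqrt{d}$ via $\|\omega_{i,\Delta}\|_1\leq\sqrt{d}\,\|\omega_{i,\Delta}\|_2$ and reads the right-hand side as $\sqrt{d}\,\|\omega_{i,\Delta}\|^2\,\|\Delta\|$ (the form actually used in Corollary~\ref{corollary:lipschitz}), whereas you use $\|\omega_{i,\Delta}\|^2\leq\sqrt{d}\,\|\omega_{i,\Delta}^2\|$ under the componentwise-square reading of the (admittedly ambiguous) notation, which yields a marginally tighter bound serving the same downstream purpose.
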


\begin{proof}
 We denote by $\omega^{(j)}_i$ the $j$th coordinate of $\omega_{i}$; where recall $\omega_{i,\Delta} = \langle \omega_i, \Delta \rangle$.    
 \begin{align*}
  \left|\nabla D_{\hat{K}}(\Delta)^2\right|
  &= 
  2 \left| \frac{1}{t} \sum^t_{i=1}\sum^d_{j=1} \omega^{(j)}_{i} \sin (\langle \omega_{i}, \Delta\rangle)\right| 
  \leq  
  2  \frac{1}{t} \sum^t_{i=1}\sum^d_{j=1} |\omega^{(j)}_{i}| \left| \sin (\langle \omega_{i}, \Delta\rangle)\right|
  \\&\leq  
   2 \frac{1}{t} \sum^t_{i=1} \|\omega_{i,\Delta}\|_1 |\langle \omega_{i}, \Delta\rangle|
  \leq  
   2 \frac{1}{t} \sum^t_{i=1} \| \omega_{i} \|_1 \| \omega_{i} \| \| \Delta \|
\end{align*} 

\vspace{-.4in}
\end{proof}

\begin{corollary}
For any $c \geq 0$, over the region $\| \Delta \| \leq c$, the Lipschitz constant of $D_{\hat{K}}(\Delta)^2$ is bounded above by $O(c \cdot \sqrt{d} \log (d/\delta))$ with probability at least $1 - O(\delta)$.
\label{corollary:lipschitz}
\end{corollary}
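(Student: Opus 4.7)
The plan is to combine the gradient bound from the preceding lemma with standard concentration inequalities for $\chi^2$ random variables. The preceding lemma gives the deterministic inequality
\[
|\nabla D_{\hat{K}}(\Delta)^2| \;\leq\; \sqrt{d}\,\|\Delta\| \sum_{i=1}^{t} \|\omega_{i,\Delta}\|^2,
\]
so on the region $\|\Delta\| \leq c$ it suffices to show that $\sum_{i=1}^{t}\|\omega_{i,\Delta}\|^2 = O(t \log n)$ with probability at least $1 - O(1/n)$; the stated Lipschitz bound then follows by taking the sup over the region.

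First I would observe that each $\|\omega_{i,\Delta}\|^2$ is a $\chi^2$ random variable (a sum of squared standard Gaussian coordinates in the relevant regime where the dimension $d$ is small, as in Section \ref{sec:lowD}). I would then reuse exactly the Laurent--Massart tail bound already invoked in Lemma \ref{lem:OmegaChi}: for a $\chi^2$ variable $X$ with $d$ degrees of freedom, $\Pr[X \geq d + 2\sqrt{dx} + 2x] \leq e^{-x}$. Setting $x = \Theta(\log n)$ gives $\|\omega_{i,\Delta}\|^2 = O(d + \log n) = O(\log n)$ with probability at least $1 - 1/n^2$, under the small-$d$ assumption (i.e.\ $d = O(\log n)$) of this section.

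Next I would take a union bound over the $t$ features (where $t$ is polynomial in $n$, so $t/n^2 = O(1/n)$), yielding that $\|\omega_{i,\Delta}\|^2 = O(\log n)$ simultaneously for all $i \in \{1, \ldots, t\}$ with probability at least $1 - O(1/n)$. Summing over $i$ gives $\sum_{i=1}^{t}\|\omega_{i,\Delta}\|^2 = O(t \log n)$, and substituting back into the gradient bound together with $\|\Delta\| \leq c$ produces the desired Lipschitz bound of $O(c \cdot t \sqrt{d} \log n)$.

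There is no real obstacle; the step that requires the most care is simply choosing the deviation parameter in the $\chi^2$ tail bound large enough to survive the union bound over all $t$ features while keeping the per-coordinate norm bound at $O(\log n)$. This is the same technique already used for Lemma \ref{lem:OmegaChi}, so the argument is essentially a concentration-plus-union-bound exercise built on top of the previously established gradient inequality.
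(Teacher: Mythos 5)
Your proposal is correct and follows essentially the same route as the paper: the paper's own proof is a one-line application of the gradient lemma plus the bound $\|\omega_{i,\Delta}\|^2 \leq O(\log n)$ holding with probability $1 - 1/n^q$ for large $q$ (which implicitly absorbs the union bound over the $t$ features that you spell out). Your added care about the $\chi^2$ degrees of freedom and the $d = O(\log n)$ regime is consistent with how the corollary is later applied, but does not constitute a different argument.
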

\begin{proof}
We can bound any coordinate $\omega_i^{(j)}$ of $\omega_i$ so that $|\omega_i^{(j)}| \leq O(\log \frac{1}{\delta})$ with probability at least $1-\delta$.  By a union, bound the all coordinates $|\omega_i^{(j)}| \leq O(\log \frac{d}{\delta})$ with probability at least $1-\delta$.  
So the gradient is bounded by 
$2 \|\Delta\| \frac{1}{t} \sum_{i=1}^t \|\omega_i\|_1 \|\omega_i\|
\leq
\|\Delta\| \sqrt{d} O(\log \frac{1}{\delta})
\leq 
O(c \cdot \sqrt{d} \log \frac{1}{\delta})$, 
which also bounds the Lipschitz constant.
\end{proof}

In case $c = \frac{\sqrt{\eps}}{2 \ln(4/\eps \delta)}$, the Lipschitz constant is $O(\sqrt{\eps d} \log \frac{d}{\delta})$ with probability at least $1-\delta$.

\paragraph{Fine-grained small distance structure.}
We now analyze equation (\ref{equation:RelativeErrorObjective}).  We first state a standard bound on $\chi^2$ random variables $\sum_{i=1}^t \omega_{i,\Delta}^2$, and then show how to bound the other terms.  

\begin{lemma}   \label{lem:OmegaChi}
For $\eps\in (0,1)$, $\delta \in (0,\frac{1}{2})$, if $t \geq 8 \frac{1}{\eps^2} \ln (2/\delta)$ then 
$
\Pr \left[\frac{1}{t}\sum_{i=1}^t\omega^2_{i,\Delta} \notin [1 - \eps, 1 + \eps] \right] \leq \delta.  
$
\end{lemma}
\begin{proof}
Here we use Lemma 1 from \cite{10.2307/2674095}; if $X$ is a $\chi^2$ random variable with $t$ degrees of freedom
$ 
\Pr[t - 2 \sqrt{tx} \leq X \leq t + 2 \sqrt{tx} + 2x] \geq 1 - 2e^{-x}.  
$
We can set $x = \frac{1}{8} t \eps^2$ then 
$t - 2 \sqrt{tx} = t - \eps t/\sqrt{2}$, and 
$t + 2 \sqrt{tx} + 2x = t + \eps t/\sqrt{2} + \frac{1}{4} \eps^2 t < t + \eps t$.
Also, 
$2e^{-x} = 2 e^{-\frac{1}{8} t \eps^2} = 2 e^{- \ln (2/\delta)} = \delta/2 \leq \delta$ for $\delta \leq 1/2$.  
Therefore, $\frac{1}{t}\sum_{i=1}^t\omega^2_{i,\Delta} \notin [1 - \eps, 1 + \eps]$ with probability at most $\delta$.    
\end{proof}

Now to bound the other parts ($\|\Delta\|^2/2$ and the term containing $\omega^4_{i,\Delta}$) of equation (\ref{equation:RelativeErrorObjective}) requires a further restriction on $\|\Delta\|$.  

\begin{lemma}   \label{lem:SpikePreservation}
For $\eps\in (0,1)$ and $\delta \in (0,2/5)$, if $\|\Delta\| \leq \frac{\sqrt{\eps}}{2\ln (4/\eps \delta)}$ for a constant $C$, and $t \geq 18 \frac{1}{\eps^2} \ln (4/\delta)$, then with probability at least $1 - \delta$, 
 for all $\lambda \in [0,1]$ we have 
$
\frac{D_{\hat{K}}(\lambda \cdot \Delta)^2}{D_{K}(\lambda \cdot \Delta)^2} \in [1 - \eps, 1 + \eps].
$
\end{lemma}

\begin{proof}
If $\omega$ is a standard Gaussian variable, then $|\omega| \leq  \sqrt{2 \ln (1/\delta')}$ with probability at least $1 - \delta'$. 
Using $\delta' = \delta/2t$, then applying a union bound ensures that  (using $\ln(4/\delta) < 1/\delta$ for $\delta < 2/5$)
\[
\omega_{i,\Delta} 
\leq 
\sqrt{2 \ln (2t/\delta)} 
= 
\sqrt{2 \ln(\frac{16}{\delta \eps^2} \ln(4/\delta))}
\leq
\sqrt{2 \ln(\frac{16}{\delta^2 \eps^2})}
=
2\sqrt{\ln(4/\delta\eps)},
\]
 for $t$ such random variables with probability at least $1-(\delta' t) = 1-\delta/2$.  
This means, if 
$\|\Delta\| \leq \frac{\sqrt{\eps}}{2 \ln (4/\eps \delta)}$
then
$\omega_{i,\Delta} \|\Delta\| \leq \sqrt{\frac{\eps}{\ln (4/\eps \delta)}}$ 
with probability at least $1 - \delta/2$ for $t$ such random variables.  Also then each $\omega_{i,\Delta} \|\Delta\| \leq 1$ satisfies the conditions for ($\ref{equation:RelativeErrorObjective}$).

Then using $\|\Delta\| \leq \frac{\sqrt{\eps}}{\log(1/\eps\delta)}$ and $\omega_{i,\Delta} \leq 2\sqrt{\ln(4/\eps\delta)}$ we can bound the last term in (\ref{equation:RelativeErrorObjective}) as
\[
\frac{\| \Delta \|^2}{12} \cdot \frac{1}{t} \sum_{i=1}^t \omega^4_{i,\Delta} 
= 
\frac{\| \Delta \|^2}{12} \cdot \left(2 \sqrt{\ln(4/\eps \delta)}\right)^4 
\leq 
\frac{\eps}{12 \cdot 4 \ln^2 (4/\eps \delta)} \cdot 16 \ln^2(4/\eps \delta)
=
\frac{\eps}{3}. 
\]
Then along with Lemma~\ref{lem:OmegaChi}  (error $1-\frac{2\eps}{3}$) and RHS of (\ref{equation:RelativeErrorObjective}) (error $\frac{\eps}{3}$), we have $\frac{D_{\hat{K}}(\Delta)^2}{D_K(\Delta)^2} \geq 1 - \eps$. 
 
Similarly, Lemma~\ref{lem:OmegaChi} and $\frac{1}{2}\|\Delta\|^2 < \frac{\eps}{8 \ln^2 (4/\eps \delta)} < \frac{\eps}{8}$ imply the LHS of (\ref{equation:RelativeErrorObjective}) is bounded above by $(1+\frac{2}{3}\eps)(1/(1-\frac{\eps}{8})) \leq 1 + \eps$ with probability at least $1-\delta/2$.  Thus, we have $\frac{D_{\hat{K}}(\Delta)^2}{D_{K}(\Delta)^2} \in [1 - \eps, 1 + \eps]$.
 
For $\frac{D_{\hat{K}}(\lambda \Delta)^2}{D_{K}(\lambda \Delta)^2} \in [1 - \eps, 1 + \eps]$, note that the above analysis still holds if we scale $\|\Delta\|$ to be smaller, i.e. as long as $\lambda \in [0,1]$.  In particular, $\omega_{i, \Delta}$ is unchanged by the scaling $\lambda$.  
\end{proof}

\paragraph{Scaled net argument.}
We can now provide a net argument for a relative error bound for all small $\Delta$.  Intuitively, what separates typical net arguments from ours is the scaling $\lambda$ in Lemma~\ref{lem:SpikePreservation}; our `net' contains a set of line segments extending from the origin, which we call a \emph{$\lambda$-urchin}.  

\begin{lemma}
If $t = \Omega(\frac{d}{\eps^2} \log \left( \frac{d}{\eps}\frac{1}{\delta} \right))$, then with probability at least $1 - \delta$, for all $\Delta$ such that $\|\Delta\| \leq \frac{\sqrt{\eps}}{2\ln(4/\eps \delta)}$, then $\frac{D_{\hat{K}}(\Delta)^2}{D_K(\Delta)^2} \in [1 - \eps, 1 + \eps]$.
\end{lemma}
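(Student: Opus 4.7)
The plan is to prove uniform relative-error control on the small-norm ball by combining Lemma~\ref{lem:SpikePreservation}, which already gives uniform control along any fixed ray from the origin (a ``spike''), with a standard net argument over the unit sphere of directions---the ``$\lambda$-urchin'' construction alluded to in the text.

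First I would build a $\gamma$-net $\mathcal{N}\subset S^{d-1}$ of directions, of size $|\mathcal{N}|\leq(3/\gamma)^d$, with $\gamma$ to be chosen at the end. For each direction $u\in\mathcal{N}$, rescale $u$ to have norm $\sqrt{\eps}/\log n$ and apply Lemma~\ref{lem:SpikePreservation}: with probability at least $1-\delta'$ the ratio $D_{\hat K}(\lambda u)^2/D_K(\lambda u)^2$ lies in $[1-\eps,1+\eps]$ simultaneously for all $\lambda\in[0,1]$. Taking a union bound over $\mathcal{N}$ with $\delta'=\delta/(2|\mathcal{N}|)$ gives this guarantee along every rescaling of every net direction, with total probability at least $1-\delta/2$.

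Second, I would transfer the guarantee off the urchin to an arbitrary $\Delta$ in the target ball. Let $u_\Delta=\Delta/\|\Delta\|$, pick $u'\in\mathcal{N}$ with $\|u_\Delta-u'\|\leq\gamma$, and set $\Delta'=\|\Delta\|\cdot u'$. The crucial observation is that $\|\Delta'\|=\|\Delta\|$, hence $D_K(\Delta)=D_K(\Delta')$ \emph{exactly}, so only $D_{\hat K}^2$ needs to be transferred. Conditioning (with probability at least $1-\delta/2$) on the high-probability event that $\max_i\|\omega_i\|^2\leq B=O(d+\log(t/\delta))$, Corollary~\ref{corollary:lipschitz} applied on the ball of radius $c=\|\Delta\|$ shows that $D_{\hat K}(\cdot)^2$ is $O(\|\Delta\|\,t\sqrt{d}\,B)$-Lipschitz there. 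Since $\|\Delta-\Delta'\|=\|\Delta\|\cdot\|u_\Delta-u'\|\leq\|\Delta\|\gamma$, this yields
\[
|D_{\hat K}(\Delta)^2-D_{\hat K}(\Delta')^2|\ \leq\ O(\|\Delta\|^2\,t\sqrt{d}\,B\,\gamma).
\]
By~(\ref{eq:near-linear}), $D_K(\Delta)^2\geq 0.74\,\|\Delta\|^2$, so choosing $\gamma=\Theta(\eps/(t\sqrt{d}\,B))$ makes the additive slack at most $O(\eps)\cdot D_K(\Delta)^2$, which composes with the multiplicative $(1\pm\eps)$ guarantee at $\Delta'$ to yield the desired bound on $\Delta$ after adjusting constants in $\eps$.

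Third, I would close the loop on $t$. The union-bound cost is $\log|\mathcal{N}|=d\log(3/\gamma)=O(d\log(td/(\eps\delta)))$. Plugging $\delta'=\delta/|\mathcal{N}|$ into Lemma~\ref{lem:SpikePreservation}'s requirement $t=\Omega(\eps^{-2}\log(1/\delta'))$ gives $t=\Omega(\eps^{-2}(d\log(td/(\eps\delta))+\log(1/\delta)))$, which self-resolves (the $\log t$ is subleading) to $t=\Omega(\eps^{-2}\,d\log(d/(\eps\delta)))$ as claimed. The main obstacle I anticipate is arranging the Lipschitz bound so that its additive slack scales like $\|\Delta\|^2$ rather than $\|\Delta\|$, which is exactly what shrinking the Lipschitz ball radius adaptively to $c=\|\Delta\|$ accomplishes; using a single global $c=\sqrt{\eps}/\log n$ would destroy the relative guarantee for arbitrarily small $\|\Delta\|$, so the adaptive choice (and the nice fact that $\|\Delta'\|=\|\Delta\|$ preserves $D_K$ exactly) is what makes the urchin argument work.
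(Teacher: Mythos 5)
Your proposal is correct and follows essentially the same route as the paper's proof: a net of directions whose rays to the origin form the ``urchin,'' uniform control along each ray via Lemma~\ref{lem:SpikePreservation} plus a union bound, and a scale-adaptive Lipschitz transfer (Corollary~\ref{corollary:lipschitz} on the ball of radius $\|\Delta\|$, matching the paper's $(\lambda\gamma)$-net with Lipschitz constant $O(t\lambda\sqrt{\eps d})$). Your observation that $D_K(\Delta)=D_K(\Delta')$ exactly because $D_K$ depends only on $\|\Delta\|$ is a small clean-up of the paper's equation~(\ref{eq:trineterror}), but it does not change the argument.
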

\begin{proof}
The proof will first consider distances $\Delta$ such that $\{ \Delta : \|\Delta\| = R_\eps \}$ where $R_\eps = \frac{\sqrt{\eps}}{2\ln(4/\eps \delta)}$, 
and then generalize to smaller distances using Lemma \ref{lem:SpikePreservation} and a construction we call a $\lambda$-urchin.  

\noindent\textbf{Fixed distance case: }  
Consider two points $\Delta_1, \Delta_2$ from the surface $\{ \Delta : \|\Delta\| = R_\eps\}$.
If $\|\Delta_1 - \Delta_2\| \leq \frac{\sqrt{\eps}}{\sqrt{d} \log \frac{1}{\delta}} R_\eps^2$ then Corollary~\ref{corollary:lipschitz} implies
\begin{align*}
 \left| D_{\hat{K}}(\Delta_1)^2 - D_{\hat{K}}(\Delta_2)^2 \right|
& \leq  
O(\sqrt{\eps d} \log\frac{1}{\delta}) \cdot \|\Delta_1 - \Delta_2\|
\leq 
O(\sqrt{\eps d}\log\frac{1}{\delta}) \cdot \frac{\sqrt{\eps}}{\sqrt{d} \log\frac{1}{\delta}} R_\eps^2
\\ & =
O(\eps \cdot R_\eps^2)
=
O(\eps) \cdot D_K(\Delta_1)^2.  
\end{align*}

Now let $\Gamma_\gamma$ be a $\gamma$-net  over 
$\{ \Delta : \|\Delta\| = R_\eps \}$ 
where 
$\gamma \leq \frac{\sqrt{\eps}}{\sqrt{d} \log \frac{d}{\delta}} R_\eps^2$. 
For any 
$\Delta_1 \in \{\Delta : \|\Delta\| = R_\eps \}$, 
there exists $\Delta_2 \in \Gamma_\gamma$ such that 
$\|\Delta_1 - \Delta_2\| \leq \gamma$. 
Then the above implies
\begin{equation}
(1 - O(\eps)) D_{\hat{K}}(\Delta_2)^2 
\leq 
D_{\hat{K}}(\Delta_1)^2 
\leq 
(1 + O(\eps))D_{\hat{K}}(\Delta_2)^2.  
\label{eq:netgaperror}
\end{equation}

By the triangle inequality, equation (\ref{eq:near-linear}), and  $\sqrt{d} \log \frac{d}{\delta} \cdot 2 \ln(4/\eps \delta) > 1$, we have  
\begin{eqnarray}  \label{eq:trineterror}
|D_K(\Delta_1) - D_K(\Delta_2)|
&\leq &
D_K(\Delta_1,\Delta_2) 
\leq 
\|\Delta_1 - \Delta_2\| 
\leq  
\gamma   
\\ & \leq &
\frac{\sqrt{\eps}}{\sqrt{d} \log \frac{d}{\delta}} R_\eps^2
=
\eps
\frac{1}{\sqrt{d} \log \frac{d}{\delta} \cdot 2 \ln(4/\eps \delta)} R_\eps
< 
\eps \cdot O(D_K(\Delta_1)).   \nonumber
\end{eqnarray}

We will choose $t = \Omega(\frac{1}{\eps^2} \log \frac{|\Gamma_\gamma|}{\delta})$ so the following holds over $\Gamma_\gamma$ with probability at least $1-\delta$
\begin{equation}
(1 - O(\eps)) D_{K}(\Delta_2)^2 \leq D_{\hat{K}}(\Delta_2)^2 \leq (1 + O(\eps))D_{K}(\Delta_2)^2.  
\label{eq:neterror}
\end{equation}

These equations (\ref{eq:netgaperror}),  (\ref{eq:neterror}), and (\ref{eq:trineterror}) show, respectively that the ratios
$\frac{D_{\hat K}(\Delta_1)}{D_{\hat K}(\Delta_2)}$,
$\frac{D_{\hat K}(\Delta_2)}{D_{K}(\Delta_2)}$, and
$\frac{D_{K}(\Delta_2)}{D_{K}(\Delta_1)}$
are all in $[1+O(\eps), 1-O(\eps)]$; hence we can conclude
\begin{equation}
\label{eq:error-on-sphere}
|D_K(\Delta_1) - D_{\hat{K}}(\Delta_1)| \leq O(\eps) \cdot D_{K}(\Delta_1).  
\end{equation}
Which are in turn $1 \pm O(\eps)$ relative error bounds for the kernel distance, over $\{ \Delta : \|\Delta\| = R_\eps \}$.

\noindent\textbf{All distances case:  }
For the region $\{ \Delta : \|\Delta\| < R_\eps \}$, consider again $\Gamma_\gamma$. For each net point $p \in \Gamma_\gamma$ we draw a line segment from $p$ to the origin, producing the set of line segments $\bar{\Gamma}_\gamma$, that we call the \emph{$\gamma$-urchin}.
By Lemma~\ref{lem:SpikePreservation}, and $t = \Omega(\frac{1}{\eps^2} \log \frac{|\Gamma_\gamma|}{\delta})$, with probability at least $1-\delta$, we have relative error bounds for the Gaussian kernel distance over the $\gamma$-urchin.

Now for any $\lambda \in (0,1)$, consider the intersection $\{ \Delta : \|\Delta\| = \lambda R_\eps \} \cap \bar{\Gamma}_\gamma$. We see that the $\gamma$-urchin induces a net over $\{ \Delta : \|\Delta\|= \lambda R_\eps\}$. Due to scaling we can see that, in fact, it is a $(\lambda \gamma)$-net.   
So the distance between any point in $\{ \Delta : \|\Delta\| = \lambda R_\eps\}$ and the closest net point is bounded above by  
$\frac{ \lambda \sqrt{\eps}}{\sqrt{d} \log \frac{d}{\delta}} R_\eps^2$. 
From Corollary \ref{corollary:lipschitz}, the Lipschitz constant is now $O(\lambda \cdot  \sqrt{\eps d})$.

By arguments similar to those leading to (\ref{eq:error-on-sphere}) we obtain, for any $\Delta_1 \in \{ \Delta : \|\Delta\| = \lambda R_\eps\}$
\begin{equation}
|D_K(\Delta_1) - D_{\hat{K}}(\Delta_1)| \leq O(\eps) \cdot \lambda \cdot R_\eps \leq O(\eps) \cdot D_{K}(\Delta_1).    
\label{eq:errorlambdasphere}
\end{equation}
Since this holds for all $\lambda \in [0,1]$, we obtain relative error bounds over $\{ \Delta : \|\Delta\| \leq R_\eps\}$.

The size of $\Gamma_\gamma$ is bounded above by 
$O((R_\eps \frac{1}{\gamma})^d) = 
O((R_\eps \cdot \frac{\sqrt{d} \log \frac{d}{\delta}}{\sqrt{\eps}}  \frac{1}{R_\eps^2})^d)
=
O((\frac{\sqrt{d} \log \frac{d}{\delta}}{\sqrt{\eps}}  \frac{1}{R_\eps})^d)
=
O((\frac{\sqrt{d} \log \frac{d}{\delta} \log \frac{1}{\eps \delta}}{\eps})^d)$.
It is sufficient to have 
$t = O(\frac{1}{\eps^2} \log \frac{|\Gamma_\gamma|}{\delta}) 
= 
O(\frac{d}{\eps^2} \log \frac{d}{\eps \delta})$ 
so that relative error holds over the $\gamma$-net and the $\gamma$-urchin simultaneously, which imply (\ref{eq:errorlambdasphere}) and (\ref{eq:error-on-sphere}), with probability at least $1 - \delta$.
\end{proof}

\begin{corollary} \label{cor:rel-small-dist}
 If $t = \Omega(\frac{d}{\eps^2} \log \frac{d}{\eps \delta} )$, then for all $\Delta$ such that $\|\Delta\| \leq 1$, $\frac{D_{\hat{K}}(\Delta)}{D_K(\Delta)} \in [1 - \eps, 1 + \eps]$ with probability at least $1 - \delta$.
\end{corollary}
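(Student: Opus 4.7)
The plan is to extend the preceding lemma, which handles $\|\Delta\|\le \sqrt{\eps}/L$ for $L=\Theta(d\log(d/\eps))$, to the full range $\|\Delta\|\le 1$ by adding a covering-net argument on the annulus $\sqrt{\eps}/L \le \|\Delta\| \le 1$. In this annulus the $\lambda$-urchin construction used in the preceding lemma is unavailable, because Lemma~\ref{lem:SpikePreservation} is restricted to very small $\|\Delta\|$; however, the pairwise lemma immediately preceding the corollary (requiring only $\|x-y\|\le\sigma$ and $m=\Omega(\eps^{-2}\log(1/\delta))$) gives relative error at any single $\Delta$ with $\|\Delta\|\le 1$. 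The idea is to apply that pairwise lemma at a discrete net and interpolate using the Lipschitz constant of $D_{\hat K}(\cdot)^2$.

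First, I would take a $\gamma$-net $\Gamma$ of the unit ball $\{\Delta:\|\Delta\|\le 1\}$ with $|\Gamma|\le(3/\gamma)^d$, invoke the pairwise lemma at every net point with failure probability $\delta/(2|\Gamma|)$, and union-bound; this requires $t=\Omega(\eps^{-2}(d\log(1/\gamma)+\log(1/\delta)))$. Next, I would bound the Lipschitz constant of $D_{\hat K}(\cdot)^2$ on $\{\|\Delta\|\le 1\}$ by $\Lambda=O(t\sqrt{d}\log(t/\delta))$ with probability at least $1-\delta/2$, via the argument of Corollary~\ref{corollary:lipschitz} but with the per-$\omega$ tail bound tuned to $\delta/(2t)$ instead of $1/n$. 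Choosing $\gamma=\Theta(\eps^2/(L^2\Lambda))$ forces the net-gap error to satisfy
\[
\bigl|D_{\hat K}(\Delta)^2-D_{\hat K}(\Delta')^2\bigr|\le\gamma\Lambda = O(\eps^2/L^2)\le O(\eps)\cdot D_K(\Delta)^2,
\]
where the last step uses $D_K(\Delta)^2\ge 0.7\|\Delta\|^2\ge 0.7\eps/L^2$ from (\ref{eq:near-linear}). The analogous bound on $|D_K(\Delta)^2-D_K(\Delta')^2|$ follows because $D_K$ is itself $1$-Lipschitz in $\Delta$ on the unit ball by (\ref{eq:near-linear}).

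Finally, I would chain the three squared ratios $D_{\hat K}(\Delta)^2/D_{\hat K}(\Delta')^2$, $D_{\hat K}(\Delta')^2/D_K(\Delta')^2$, and $D_K(\Delta')^2/D_K(\Delta)^2$, each within $1\pm O(\eps)$ by the three steps above, exactly mirroring equations (\ref{eq:netgaperror})–(\ref{eq:neterror}) of the preceding lemma; a square root via $\sqrt{1\pm O(\eps)}=1\pm O(\eps)$ gives the stated bound for $\Delta$ in the annulus, while $\|\Delta\|\le\sqrt{\eps}/L$ is handled directly by the preceding lemma. Plugging $L=\Theta(d\log(d/\eps))$ into $\log(1/\gamma)$ yields $\log(1/\gamma)=O(\log(d/(\eps\delta)))$, so the union-bound requirement closes at $t=\Omega(d\eps^{-2}\log(d/(\eps\delta)))$, matching the statement, after a union bound over all failure events and rescaling $\eps$ by a constant. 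The main obstacle is the self-referential bookkeeping, since $\Lambda$, $|\Gamma|$, and the required $t$ all depend on one another; but because the outer radius is now a constant instead of $\sqrt{\eps}/\log n$, the extra dependence only appears inside a logarithm and is absorbed, exactly as in the preceding lemma.
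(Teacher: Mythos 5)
Your proposal is correct and takes essentially the same route as the paper: both split off the annulus $\sqrt{\eps}/\log n < \|\Delta\| \leq 1$, cover it with a $\gamma$-net, apply the single-pair relative-error lemma at each net point via a union bound, interpolate across net gaps using the Lipschitz bound of Corollary~\ref{corollary:lipschitz}, chain the three ratios, and observe that the self-referential choice of $t$, $\gamma$, and the Lipschitz constant closes because everything enters only inside a logarithm, with $\|\Delta\| \leq \sqrt{\eps}/\log n$ delegated to the preceding lemma. Your finer choice of $\gamma$ (carrying the extra $L^2$ factor so that the net-gap error is compared against $D_K(\Delta)^2 = \Omega(\eps/L^2)$ at the inner boundary of the annulus, rather than against $\Omega(\eps)$) is actually the more careful bookkeeping, and it affects only constants inside the logarithm.
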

\begin{proof}
Consider the region $1 \geq \|\Delta\| > \frac{\sqrt{\eps}}{2\ln (4/\eps \delta)}$. The Lipschitz constant is bounded above by $O(t \sqrt{d} \log \frac{d}{\delta})$ by Corollary \ref{corollary:lipschitz}, so we only need a $\gamma$-net where $\gamma \leq \frac{\eps^2}{t\sqrt{d}\log \frac{d}{\delta}}$ to give relative error by standard net arguments. The size of this net is at most $\left( \frac{\sqrt{d}\log (d/\delta)}{ \eps^2} \right)^d$, so again it suffices to set $t = O(\frac{d}{\eps^2} \log \frac{d}{\eps \delta} )$ for our embeddings as above.
\end{proof}
Combined with Lemma \ref{lem:bigD-rel} for $\|\Delta\| > 1$ we obtain:

\begin{theorem}
 If $t = \Omega\left(\frac{d}{\eps^2} \log \left( \frac{d}{\eps} \frac{\mathcal{M}}{\delta}\right)\right)$, then for any $\mathcal{M} \geq 0$, $\frac{D_{\hat{K}}(x,y)^2}{D_K(x,y)^2} \in [1 - \eps, 1 + \eps]$ holds for all $x,y \in \R^d$ such that $\|x - y\|/\sigma\leq \mathcal{M}$ with probability at least $1 - \delta$.
\end{theorem}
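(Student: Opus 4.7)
The plan is to split the region $\{\Delta : \|\Delta\| \leq \mathcal{M}\}$ into two parts: the "small-distance" part where $\|\Delta\|\leq 1$ and the "bulk" part where $1 < \|\Delta\|\leq \mathcal{M}$, handle them with separate arguments, and then union bound. The small-distance part is already covered by Corollary \ref{cor:rel-small-dist}, which gives relative error bounds on the entire ball of radius $1$ using $t=\Omega((d/\eps^2)\log(d/(\eps\delta)))$ features with probability $1-\delta/2$. So the substantive work is the bulk part.

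For the bulk, I plan to use a classical net argument, taking advantage of the fact that $D_K(\Delta)\geq \sqrt{2(1-1/\sqrt e)}$ is bounded below by a positive constant whenever $\|\Delta\|\geq 1$ (this was used inside the proof of Lemma \ref{lem:bigD-rel}). Concretely, first I would fix a $\gamma$-net $\Gamma$ over the annulus $\{\Delta : 1\leq \|\Delta\|\leq \mathcal{M}\}\subset \R^d$ of size $|\Gamma|=O((\mathcal{M}/\gamma)^d)$. At each net point, the additive concentration bound used in Lemma \ref{lem:bigD-rel} (driven by Lemma \ref{lemma:ExpectationInnerProduct} and a Chernoff–Hoeffding argument) gives $|D_K(\Delta)^2-D_{\hat K}(\Delta)^2|\leq \eps/2$ with probability $1-\delta'$ using $t=O((1/\eps^2)\log(1/\delta'))$ features. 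Union bounding over $\Gamma$ with $\delta'=\delta/(2|\Gamma|)$, we get additive error $\eps/2$ simultaneously at every net point using $t=O((1/\eps^2)(\log|\Gamma|+\log(1/\delta)))=O((d/\eps^2)\log(\mathcal{M}/(\gamma\delta)))$ features, and since $D_K^2\geq 2(1-1/\sqrt e)\geq 1/2$ in this region, this translates to relative error $\eps$ at each net point (as in Lemma \ref{lem:bigD-rel}).

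To extend from the net to arbitrary $\Delta$ in the annulus, I would use the Lipschitz bound from Corollary \ref{corollary:lipschitz}: with $c=\mathcal{M}$ and the high-probability bound $\|\omega_{i,\Delta}\|^2=O(\log(1/\delta))$ (replacing the $\log n$ appearing in that corollary by $\log(1/\delta)$ via the same Gaussian tail), the Lipschitz constant of $D_{\hat K}(\Delta)^2$ on the ball of radius $\mathcal{M}$ is $L=O(\mathcal{M}\cdot t\sqrt{d}\log(1/\delta))$ with probability at least $1-\delta/4$. Setting $\gamma = \Theta(\eps/L) = \Theta(\eps/(\mathcal{M} t\sqrt{d}\log(1/\delta)))$ ensures that moving from any $\Delta$ to its nearest net point perturbs $D_{\hat K}(\Delta)^2$ by at most $\eps/4$, and similarly perturbs $D_K(\Delta)^2$ by at most $\eps/4$ via equation (\ref{eq:near-linear}) and the triangle inequality. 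Combined with the additive error at the net point and the lower bound $D_K^2\geq 1/2$, this delivers relative error $O(\eps)$ for every $\Delta$ in the annulus; rescaling $\eps$ by a constant finishes.

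The final step is to verify the bookkeeping: plugging $\gamma$ back in gives $\log|\Gamma|=O(d\log(\mathcal{M} t\sqrt{d}/(\eps\delta)))$, and solving the self-referential inequality $t=\Omega((d/\eps^2)\log(\mathcal{M} t\sqrt{d}/(\eps\delta)))$ yields $t=\Omega((d/\eps^2)\log(d\mathcal{M}/(\eps\delta)))$, which matches the stated bound. Union bounding the failure events of (i) the Corollary~\ref{cor:rel-small-dist} small-distance event, (ii) the net concentration event, and (iii) the Lipschitz event (each with probability at most $\delta/3$) completes the proof. The main obstacle I anticipate is not conceptual but quantitative: threading the $\log(1/\delta)$ factor carefully through the Lipschitz constant so that the self-consistent choice of $t$ stays at $(d/\eps^2)\log(d\mathcal{M}/(\eps\delta))$ rather than picking up spurious $\log\log$ or $\log\mathcal{M}$ factors in the wrong places.
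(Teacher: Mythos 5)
Your proposal is correct and follows essentially the same route as the paper: split at $\|\Delta\|=1$, invoke Corollary~\ref{cor:rel-small-dist} for the small-distance region, and handle the annulus $1\leq\|\Delta\|\leq\mathcal{M}$ by converting additive error to relative error via the lower bound $D_K^2\geq 2(1-1/\sqrt{e})$ as in Lemma~\ref{lem:bigD-rel}. The paper's own proof is a one-line accounting of the two cases, so your explicit net-plus-Lipschitz argument for upgrading the per-pair Lemma~\ref{lem:bigD-rel} to a uniform statement over the annulus is exactly the step the paper leaves implicit (it is the standard Rahimi--Recht style argument giving the $\log\mathcal{M}$ dependence), and your bookkeeping lands on the same bound.
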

\begin{proof}
Set 
$t = \Omega(\frac{d}{\eps^2} \log \frac{d}{\eps \delta}) 
+ 
\Omega(\frac{d}{\eps^2}d \log \frac{d \mathcal{M}}{\eps \delta}) 
= 
\Omega\left(\frac{d}{\eps^2} \log \left( \frac{d}{\eps} \frac{\mathcal{M}}{\delta}\right)\right)$ 
to account for both cases $\|\Delta\| = \frac{\|x-y\|}{\sigma} \leq 1$ and $1 \leq \frac{\|x-y\|}{\sigma} \leq \mathcal{M}$, respectively.
\end{proof}

\section{Lower Bounds for Low Dimensions}
\label{sec:LowDimLB}

When is $n$ is unbounded, a recent paper~\citep{SS15} implies that, even for small $d$, $D_{\hat K}$ cannot $(1 + \eps)$-approximate $D_K$ unless $\mathcal{M}$ is bounded.
Here we provide an explicit and \emph{general} lower bound depending on $\mathcal{M}$ and $d$ that matches the our upper bound up to a $O( \log \frac{1}{\eps})$ factor.  

First we need the following general result \citep{Alon:2003:PRE:2651487.2651714}[Theorem 9.3] related to embedding to $\ell_2$.
Let $B$ be an $n \times n$ real matrix with $b_{i,i}=1$ for all $i$ and $|b_{i,j}| \leq \eps$ for all $i \neq j$. If the rank of $B$ is $r$, and $\frac{1}{\sqrt{n}} < \eps < 1/2$, then
$
r \geq \Omega(\frac{1}{\eps^2 \log (1/\eps)}  \log n).  
$
Geometrically, $r$ is the minimum number of dimensions that can contain a set of $n$ near-orthogonal vectors. Indeed, any set $S$ of $n$ near-orthogonal vectors can be rotated to form the rows of a matrix of the form of $B$, and the rank is then the lowest number of dimensions that contain $S$.

\begin{lemma}	\label{lem:subspaceKernelLB}
Given $\mathcal{M} \geq 0$, let $B_\mathcal{M}(0)$ be the ball in $\mathbb{R}^d$ centered at the origin with radius $\mathcal{M}$. Let $h : \mathbb{R}^d \rightarrow \mathbb{R}^t$ be a mapping such that for any $x \neq y \in B_\mathcal{M}(0)$  we have $|K(x,y) - h(x)\cdot h(y)| \leq \eps \leq \frac{1}{4}$.
Then with sufficiently large $\mathcal{M}$, $t = \Omega(\frac{d}{\eps^2 \log  (1/\eps)} \log (\frac{\mathcal{M}}{ \log (1/\eps)}))$.
\end{lemma}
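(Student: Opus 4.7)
The plan is to reduce the question to Alon's near-orthogonal rank bound by constructing a large set of points in $B_\mathcal{M}(0)$ whose kernel values are nearly pairwise orthogonal, and then using the approximation guarantee on $h$ to extract a set of unit vectors in $\mathbb{R}^t$ that are nearly orthogonal. The rank bound then forces $t$ to be large.

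First, I will build a packing. Set $r := \sqrt{2 \log (4/\eps)}$, so that any two points at Euclidean distance at least $r$ (taking $\sigma = 1$ for simplicity) satisfy $K(x,y) \leq \eps/4$. By a standard volume packing argument, one can place $n = \Omega((\mathcal{M}/r)^d) = \Omega((\mathcal{M}/\sqrt{\log(1/\eps)})^d)$ points $p_1, \ldots, p_n \in B_\mathcal{M}(0)$ with pairwise distances at least $2r$. Taking logarithms, $\log n = \Omega(d \log(\mathcal{M}/\log(1/\eps)))$ (using $\log \sqrt{\log(1/\eps)} = \Theta(\log \log (1/\eps))$).

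Second, I need to produce vectors with controlled norm from $h$. The only obstacle is that the hypothesis $|K(x,y) - h(x)\cdot h(y)| \leq \eps$ is assumed only for $x \neq y$, so $\|h(p_i)\|$ is not directly controlled. To fix this, for each $p_i$ pick a tiny perturbation $q_i = p_i + \delta_i$ with $\|\delta_i\| = \eta$ for some $\eta \ll 1$ chosen so that $K(p_i, q_i) \geq 1 - \eps/4$. Then by hypothesis $h(p_i) \cdot h(q_i) \geq 1 - 5\eps/4$, and Cauchy–Schwarz yields $\max(\|h(p_i)\|, \|h(q_i)\|) \geq \sqrt{1 - 5\eps/4} \geq 1/2$ for $\eps$ small enough. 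Let $x_i$ be whichever of $\{p_i, q_i\}$ attains the max. Choosing $\eta$ small enough (e.g.\ $\eta \leq r/4$) keeps pairwise distances of $x_i$'s at least $r$, so $K(x_i,x_j) \leq \eps/4$ for $i \neq j$, while also ensuring all $x_i$ lie in $B_\mathcal{M}(0)$ (absorb a constant into $\mathcal{M}$).

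Third, I normalize: let $v_i = h(x_i)/\|h(x_i)\| \in \mathbb{R}^t$. Then $\|v_i\| = 1$ and for $i \neq j$,
\[
|v_i \cdot v_j| = \frac{|h(x_i) \cdot h(x_j)|}{\|h(x_i)\|\|h(x_j)\|} \leq \frac{|K(x_i,x_j)| + \eps}{(1/2)(1/2)} \leq 5\eps.
\]
The Gram matrix $B_{ij} = v_i \cdot v_j$ has rank at most $t$, diagonal entries $1$, and off-diagonal entries bounded by $5\eps$.

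Finally, apply the cited theorem of Alon: since $\mathcal{M}$ is sufficiently large we have $1/\sqrt{n} < 5\eps < 1/2$, and so
\[
t \;\geq\; \Omega\!\left(\frac{\log n}{\eps^2 \log(1/\eps)}\right) \;=\; \Omega\!\left(\frac{d}{\eps^2 \log(1/\eps)} \log\!\frac{\mathcal{M}}{\log(1/\eps)}\right),
\]
which is the desired bound. The main obstacle is the second step: the perturbation trick is needed precisely because the hypothesis on $h$ only constrains distinct pairs, and without a uniform lower bound on $\|h(x_i)\|$ the normalization step fails. Everything else is a standard packing-plus-Gram-matrix argument.
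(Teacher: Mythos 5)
Your proposal is correct and follows essentially the same route as the paper: pack $B_\mathcal{M}(0)$ with $\Omega\bigl((\mathcal{M}/\sqrt{\log(1/\eps)})^d\bigr)$ points whose pairwise kernel values are $O(\eps)$, observe that $h$ then yields a system of near-orthogonal vectors, and invoke Alon's rank bound for matrices with unit diagonal and $\eps$-small off-diagonal entries. The one place you go beyond the paper is the perturbation trick to lower-bound $\|h(p_i)\|$ before normalizing; the paper's proof silently assumes the diagonal of the Gram matrix can be taken to be $1$ (the explicit restriction $\|h(x)\|=1$ appears only in the subsequent Theorem~\ref{thm:subspaceLB}), so your extra step actually closes a small gap in the published argument, at the cost of slightly worse constants (e.g.\ needing $5\eps<1/2$ rather than $2\eps<1/2$ when applying Alon's theorem).
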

\begin{proof}
Consider a subset $S \subset \mathbb{R}^d$ in $B_{\mathcal{M}}(0)$ so for all $x,y \in S$, with $x\neq y$, we have $\|x-y\| \geq  \sigma \sqrt{2 \log \frac{1}{\eps}}$. 
Then for any $x,y \in S$, $K(x,y)=\exp(-\frac{\|x - y\|^2}{2\sigma^2}) \leq \eps$.  In particular, define $S$ as the intersection of $B_{\mathcal{M}}(0)$ with an orthogonal grid of side length $\sigma \sqrt{2 \log(1/\eps)}$; it has size $\Omega\left( \left(\frac{\mathcal{M}}{\log (1/\eps)} \right)^d\right)$.
	
For any $x,y \in S$, $|h(x) \cdot h(y)| \leq 2\eps$, and also $|\{h(s) \mid s \in S\}| = |S|$. Then \cite{Alon:2003:PRE:2651487.2651714}[Theorem 9.3] implies the dimension of $h$ must be $t = \Omega(\frac{1}{\eps^2 \log (1/\eps)}  \log |S|) = \Omega(\frac{d}{\eps^2 \log  (1/\eps)} \log (\frac{\mathcal{M}}{\log (1/\eps)}))$.
\end{proof}

\begin{theorem}  	\label{thm:subspaceLB}
Given $\mathcal{M} \geq 0$, let $B_\mathcal{M}(0)$ be the ball in $\R^d$ centered at the origin with radius $\mathcal{M}$. Let $h : \R^d \rightarrow \mathbb{R}^t$ be a mapping such that for any $x,y \in B_\mathcal{M}(0)$  we have $1-\eps \leq \frac{D_K(x,y)}{\|h(x) - h(y)\|}  \leq 1+\eps$ with $\eps \leq \frac{1}{4}$.  Restrict that for any $x \in \R^d$ that $\|h(x)\| = 1$.  If $\mathcal{M}$ is sufficiently large, $t = \Omega(\frac{d}{\eps^2 \log  (1/\eps)} \log (\frac{\mathcal{M}}{ \log (1/\eps)}))$.
\end{theorem}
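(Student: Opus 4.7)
The plan is to reduce this theorem directly to Lemma~\ref{lem:subspaceKernelLB}, which already gives the desired lower bound whenever the inner product $h(x)\cdot h(y)$ approximates the kernel $K(x,y)$ to additive error $O(\eps)$. So the only real work is to translate the hypothesis (relative error between $D_K$ and the $\ell_2$ norm of $h$) into an additive approximation of inner products, and the unit-norm restriction $\|h(x)\|=1$ is exactly what makes this possible.

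First I would write $D_K(x,y)^2 = 2 - 2K(x,y)$ and $\|h(x) - h(y)\|^2 = 2 - 2\,h(x)\cdot h(y)$; the former uses $K(x,x)=1$, and the latter uses the restriction $\|h(x)\|=1$ for all $x$. Squaring the hypothesis gives
\[
(1-\eps)^2 \|h(x) - h(y)\|^2 \leq D_K(x,y)^2 \leq (1+\eps)^2 \|h(x) - h(y)\|^2,
\]
so that $\bigl|D_K(x,y)^2 - \|h(x)-h(y)\|^2\bigr| \leq 3\eps \cdot \|h(x)-h(y)\|^2 \leq 12\eps$, using $\|h(x)-h(y)\| \leq 2$. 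Dividing by $2$ and substituting the inner-product expansions yields
\[
\bigl| K(x,y) - h(x)\cdot h(y) \bigr| \leq 6\eps
\]
for every pair $x,y \in B_{\mathcal{M}}(0)$.

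Now I would simply invoke Lemma~\ref{lem:subspaceKernelLB} with approximation parameter $\eps' = 6\eps$. The hypothesis $\eps \leq 1/4$ is enough to shrink to a regime where $\eps'$ sits in the range required by that lemma (adjusting the constant in front of $\eps$ if needed, or applying the lemma with a slightly smaller $\eps$ absorbed into the $\Omega(\cdot)$). The lemma then gives
\[
t = \Omega\!\left( \frac{d}{\eps'^{\,2}\log(1/\eps')} \log\!\left(\frac{\mathcal{M}}{\log(1/\eps')}\right)\right)
= \Omega\!\left( \frac{d}{\eps^{2}\log(1/\eps)} \log\!\left(\frac{\mathcal{M}}{\log(1/\eps)}\right)\right),
\]
since replacing $\eps'$ by $6\eps$ only alters asymptotically irrelevant constants inside both logarithms.

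The only step requiring any real care is the squaring step, where one must check that $\|h(x)-h(y)\| \leq 2$ (immediate from unit norm and triangle inequality) and that the constant blow-up from $(1\pm\eps)^2$ stays within what Lemma~\ref{lem:subspaceKernelLB} tolerates; I do not expect any serious obstacle here, since the constants are absorbed into the $\Omega$. The heavy lifting (the rank argument of Alon and the grid packing of $\Omega((\mathcal{M}/\log(1/\eps))^d)$ near-orthogonal points) is already done inside Lemma~\ref{lem:subspaceKernelLB}.
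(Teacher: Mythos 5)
Your proposal is correct and follows essentially the same route as the paper: both convert the relative distance bound into an additive $\Theta(\eps)$ bound on $|K(x,y) - h(x)\cdot h(y)|$ via the expansions $D_K(x,y)^2 = 2 - 2K(x,y)$ and $\|h(x)-h(y)\|^2 = 2 - 2\,h(x)\cdot h(y)$ (using the unit-norm restriction), and then invoke Lemma~\ref{lem:subspaceKernelLB}, absorbing the constant-factor blow-up in $\eps$ into the $\Omega(\cdot)$. The only cosmetic difference is that the paper bounds the additive gap using $D_K(x,y)^2 \leq 4$ where you use $\|h(x)-h(y)\|^2 \leq 4$; both are valid.
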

\begin{proof}
Consider a set (as in proof of Lemma \ref{lem:subspaceKernelLB}) $S \subset B_{\mathcal{M}}(0)$.  
If for all $x,y \in S$ we have $1-\eps \leq \frac{D_K(x,y)}{\|h(x) - h(y)\|}  \leq 1+\eps$, then it implies 
\[
|D_K(x,y)^2 - \|h(x) - h(y)\|^2| \leq \Theta(\eps) D_K(x,y)^2 \leq \Theta(\eps),
\] 
since $D_K(x,y) < 2$.  
Expanding $D_K(x,y)^2 = 2 - 2 K(x,y)$ and $\|h(x) - h(y)\|^2 = 2 - 2 \langle h(x), h(y) \rangle$ implies that 
$|K(x,y) - \langle h(x), h(y) \rangle| \leq \Theta(\eps)$ as well.  However, Lemma \ref{lem:subspaceKernelLB} implies that for sufficiently small $\eps$ (adjusting the constant in $\Theta(\eps)$) that we require the $t = \Omega(\frac{d}{\eps^2 \log  (1/\eps)} \log (\frac{\mathcal{M}}{ \log (1/\eps)}))$.  
\end{proof}

This implies the impossibility of fully embedding into $\ell_2$ the Gaussian kernel distance over the \emph{entire} $\R^d$, i.e. for an infinite number of points, answering a question raised by \cite{SS15}. 
This argument can also extend to show a dependency on $d \log\mathcal{M}$ is inevitable when we do not have a bound on $n$.


\section{Discussion}  
\label{sec:discuss}
We have demonstrated theoretically tight relative error for kernel distance using random Fourier features, indicating tighter approximations for several important learning applications. In the following, we make some further remarks on the implications of our results, and then also empirically observe these properties.  

\subsection{Implications in Learning and Analysis}
In addition to the applications of our bounds to 
oblivious kernel embeddings, 
kernel $k$-means clustering, and
kernel distance matching
that we discussed in Section \ref{sec:impl}, we mention a couple more below.  

\paragraph{Geometric approximation in learning and mining.}
Our results show that random feature mappings allow for a finer notion of approximating the geometry of RKHS than previously known.
In particular, our low-dimensional bounds in Section \ref{sec:lowD}, imply that if an object $U \subset \R^d$ (such as a non-linear decision boundary) and training data $S \subset \R^d$ both lie within a ball with finite radius $\mathcal{M}$, then for any point $x \in S$, the minimum kernel distance between $U$ and $x$ is approximately preserved in the random feature space as $\min_{y \in U} \|\phi(x) - \phi(y)\|$.   For instance ``large-margin'' techniques and analyses~\citep{TJHA05} condition on the margin $\gamma = \max_{x \in S} \min_{y \in U} \|\phi(x) - \phi(y)\|$ being large, so we also preserve relative errors on this margin.  
This suggests better performance guarantees for kernelized learning large-margin techniques, and those involving the minimization of $\ell_2$ distances, such as in kernel SVM (hinge-loss) and in kernel PCA (recovery error); see Section \ref{sec:exp}.  

\paragraph{Gram matrix approximation.}
The approximation error of inner products is proportional to the approximation error of distances. This is because both $\phi$ and $\hp$ map every input point to a unit vector; thus $D_K(x,y)^2 = 2 - 2K(x,y)$ and $D_{\hat K}(x,y)^2 = 2 - 2 \hat{K}(x,y)$, for any distinct $x,y \in \R^d$. Therefore $|K(x,y) - \hat{K}(x,y)|$ is the same as $\frac{1}{2} | D_{K}(x,y)^2 - D_{\hat{K}}(x,y)^2 |$.  Hence approximation error of the Gram matrix is bounded in terms of the sum of pairwise squared distances
\[ 
\| G - \hat{G} \|_1 
\leq 
\frac{1}{2} \sum_{x \in X} \sum_{y \in X} |D_{K}(x,y)^2 - D_{\hat{K}}(x,y)^2|  
\leq
\eps \sum_{x \in X} \sum_{y \in X} D_K(x,y)^2,
\]
with high probability, when $m$ is set for the appropriate data setting in our bounds.  
Thus we have in some sense sharper bounds on approximating the Gram matrix.
%

\subsection{Remark on Lower Bound in $n$}
A new result of \cite{LN17} provides a $t = \Omega(\frac{1}{\eps^2} \log n)$ lower bound for even \emph{non-linear} embeddings of a size $n$ point set in $\R^d$  into $\R^t$ that preserve distances within $(1 \pm \eps)$.  It holds for any $\eps \in (1/\min\{n,d\}^{0.4999},1)$.  Since, there exists an isometric embedding of any set of $n$ points in any RKHS into $\R^n$, then this $t = \Omega(\frac{1}{\eps^2} \log n)$ lower bound suggests that it applies to $\hat \phi$ and $\tilde \phi$ or any other technique, for almost any $\eps$.  However, it is not clear that \emph{any} point set (including the ones used in the strong lower bound proof~\citep{LN17}), can result from an isomorphic (or approximate) embedding of RKHS into $\R^n$.  Hence, this new result does not immediately imply the lower bound we show in Section \ref{sec:local-l2}.  

Moreover, the proof of Theorem \ref{thm:eps-2logn-LB} retains two points of potential interest.  First it holds for a (very slightly) larger range of $\eps \in (0,1)$.  Second, Lemma \ref{lem:RFF-JL} highlights that at very small ranges, $\hat \phi$ is indistinguishable from the standard JL embedding.


\section{Empirical Demonstration of Relative Error}
\label{sec:exp}

We demonstrate that relative error actually results from the $\hat \phi$ kernel embeddings in two ways.  First we demonstrate relative error bounds for kernel PCA.  Second we show this explicitly for pairwise distances in the embedding.

\subsection{Relative Error for Kernel PCA}

When PCA is applied to approximate a data matrix in practice, the allowed approximation error is often chosen to be a small but constant (e.g. $10\%$) fraction of the total variance. Our results imply relative error in the approximation of the total variance, so we can also show relative error in typical cases of performing kernel PCA with Gaussian kernels using Random Fourier Features.

We consider two ways of running kernel PCA on the USPS data.  By default we use the first $n=2000$ data points in $\R^d$ for $d=256$, the first $n/10$ data points of each digit.  
In the first way, we create the $n \times n$ (centered) gram matrix $G$ of all inner products, and then use the top $k$ eigenvectors to describe the best subspace of RKHS to represent the data; this is treated as a baseline.  Second we embed each point into $\R^m$ using $\hp$, generating an $n \times m$ matrix $Q$ (after centering).  The top $k$ right singular values $V_k$ of $Q$ describe the kernel PCA subspace.

Error in PCA is typically measured as the sum of squared residuals, that is for each point $q \in Q \subset \R^m$, its projection onto $V_k$ is $V_k^T V_k q$, and its residual is $r_q = \|q - V_k^T V_k q\|^2$.  Thus $r_q$ is precisely the squared kernel distance between $q$ and its projection.  And then the full error is $\hat R_k = \|Q - V_k^T V_k Q\|_F^2 = \sum_{q \in Q} \|q - V_k^T V_k q\|^2$.  
For the non-approximate case, it can be calculated as the sum of eigenvalues in the tail $R_k = \sum_{i=k+1}^n \lambda_{i}$.  

Given $R_k$ and $\hat R_k$ we can measure the relative error as $\hat R_k / R_k$.  Our analysis indicates this should be in $[1-\eps, 1+\eps]$ using roughly $t = C/\eps^2$ features, where $C$ depends on $n$ or $d \log \mathcal{M}$.  To isolate $\eps$ we calculate $|\frac{\hat R_k}{R_k} - 1|$ averaged over $10$ trials in the randomness in $\hp$.  
This is shown in Figure \ref{tab:1} using $k = 40$, with $\sigma \in \{4,8,16\}$ and varying $t \in \{50, 100, 200, 400, 800\}$.  
We observe that our measured error decreases quadratically in $t$ as expected.  Moreover, this rate is stable as a function of $\sigma$ as would be expected where the correct way to quantify error is the relative error we measure.

\begin{figure}[h]

\begin{minipage}{0.51\textwidth}
  \centering
\includegraphics[width=\textwidth]{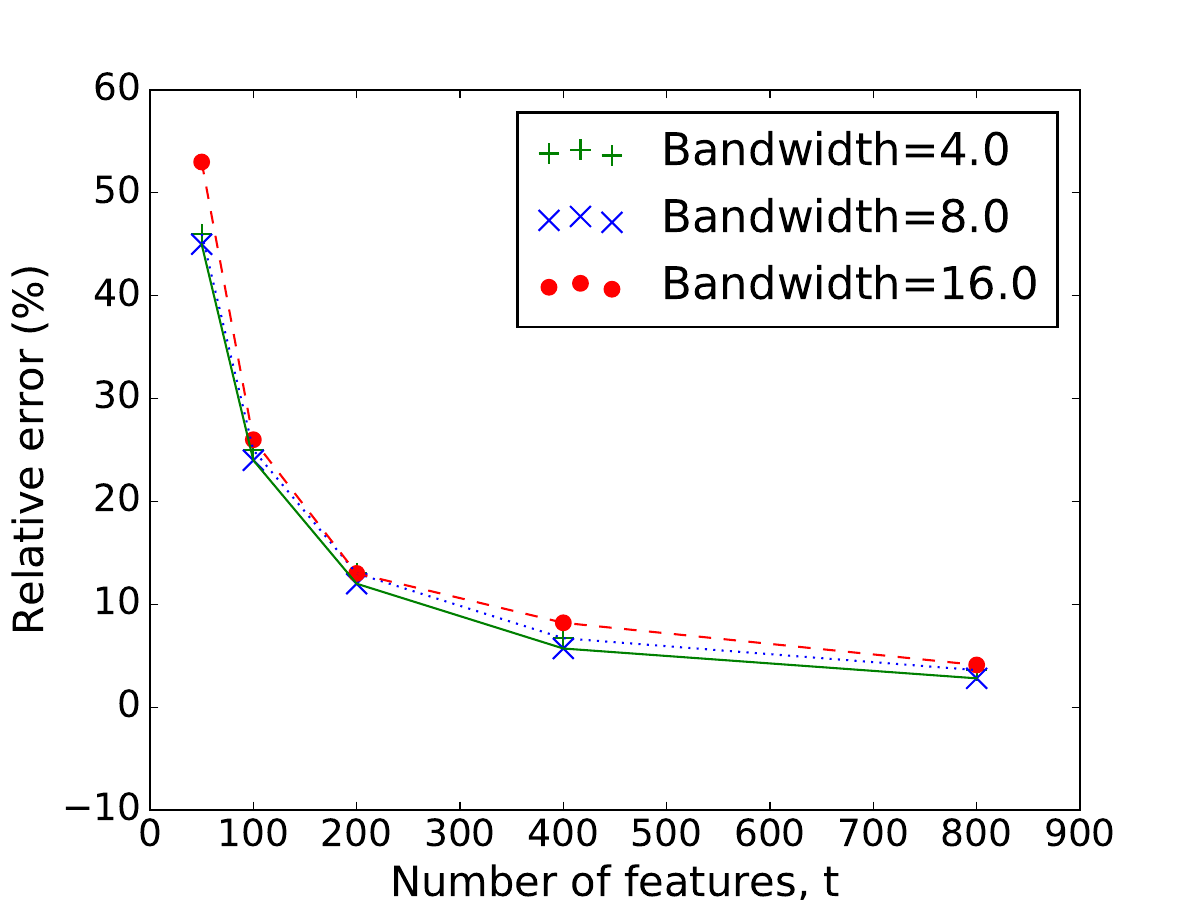}  
\end{minipage}
\begin{minipage}{0.43\textwidth}
  \centering
\begin{tabular}{cr||r|r|r}
		\hline
		&    & $\sigma=4$ &$\sigma=8$ & $\sigma=16$\\
		\hline
\multicolumn{2}{c||}{Baseline} & 1667.1 & 882.5 & 206.1\\
		\hline
		& 50  & 897.6  & 489.2 & 96.9 \\
		& 100 & 1257.6 & 666.5 & 152.2 \\
	$t$ & 200 & 1453.6 & 776.4 & 178.7 \\
		& 400 & 1554.7 & 831.5 & 189.2 \\
		& 800 & 1606.8 & 857.3 & 197.6 \\
		\hline
	\end{tabular}
\end{minipage}

  \caption{\sffamily Relative error $|\frac{\hat{R}k}{R_k}-1|$ in \% , against $t$, with $n=2000$, $k=40$ and different bandwidths. Relative error is roughly stable across different values of $\sigma$, and consistently reduced by increasing $t$. \label{tab:1}}
\end{figure}

\subsection{Pairwise Demonstrations of Relative Error}

Here we provide simulations that confirm our theoretical findings.
We randomly generate pairs of points $(x_1,y_1)$ \ldots $(x_n,y_n)$ with varying $\ell_2$ distance $\|x_i - y_i\|$; in particular, $x_i$ is a random point in a ball or radius $500$ and $y_i$ is generated to be a random point in the sphere $\|x-y\| = r_i$ where $r_1, ..., r_n$ follow a geometric distribution, ranging from approximately $10^{-4}$ to $10^4$.

In Figure~\ref{fig:VaryT+Dist}(left), for different values of $t$ (the number of features) we generate a fresh sequence of $2000$ random pairs, and record the maximum relative error $\eps_{\mathrm{max}} = \max_i \frac{D_K(x_i,y_i)}{\|\phi(x_i) - \phi(y_i)\|}$. The graph shows that $t$ is roughly proportional to $\eps^{-2}_{\mathrm{max}}$.

In Figure~\ref{fig:VaryT+Dist}(right), we examine the relative errors for all the random pairs at a wide range of $\ell_2$ norms, for $t=100$ and $t=1000$. A slight change in the error profile occurs within $\|x_i-y_i\|/\sigma \in [10^0,10^1]$, coinciding with the separation of cases $\|x-y\| \leq \sigma$ and $\|x-y\| > \sigma$ i.e. whether $\frac{\|x-y\|}{\sigma} = \Theta(1)$ in the analyses.

In either case, the relative error is bounded by a small constant value, even when $\|x_i-y_i\|$ is several magnitudes smaller than $1$, demonstrating that the extremely high concentration of the RFF for very small $\|x_i-y_i\|$ results in relative error approximation for the Gaussian kernel distance.

\begin{figure}  
\includegraphics[width=0.45\textwidth]{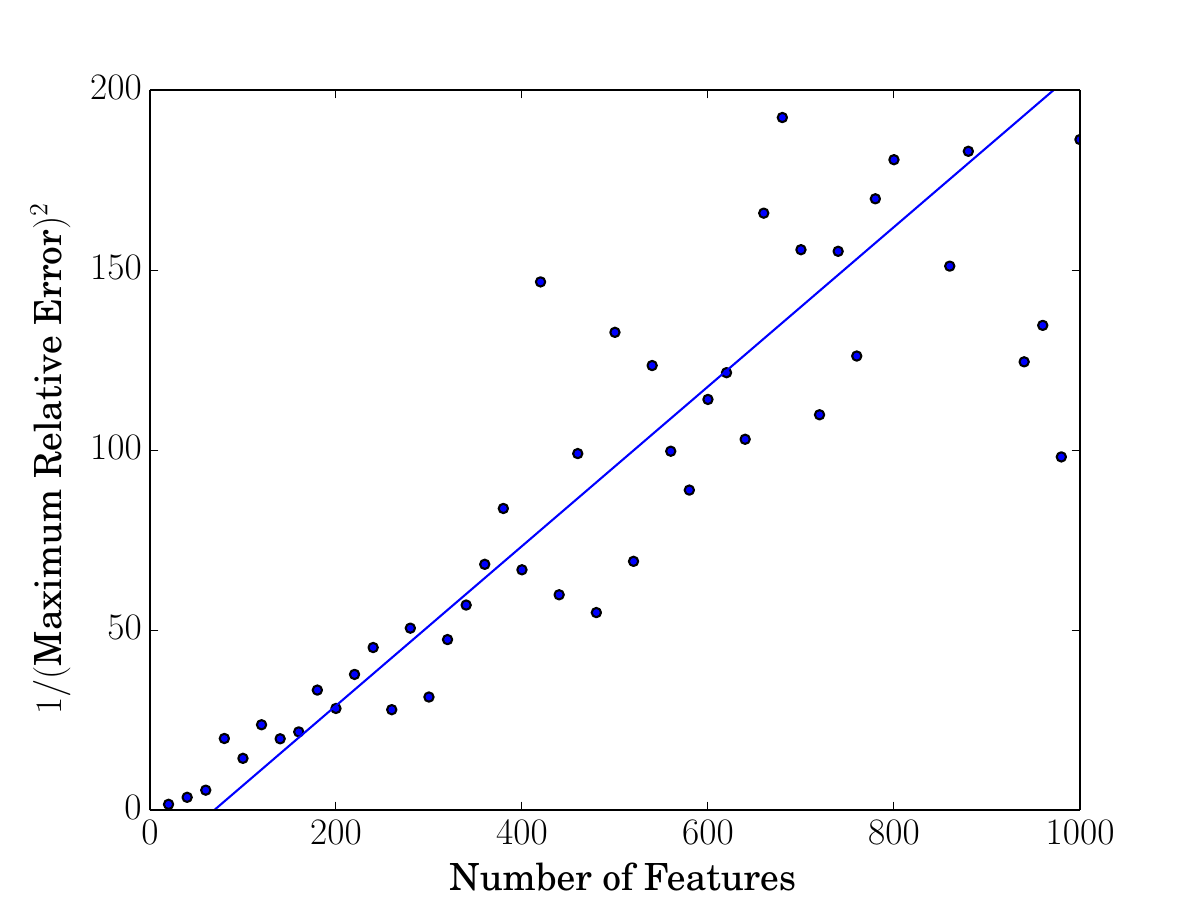}  
\hspace{0.09 \textwidth}
\includegraphics[width=0.45\textwidth]{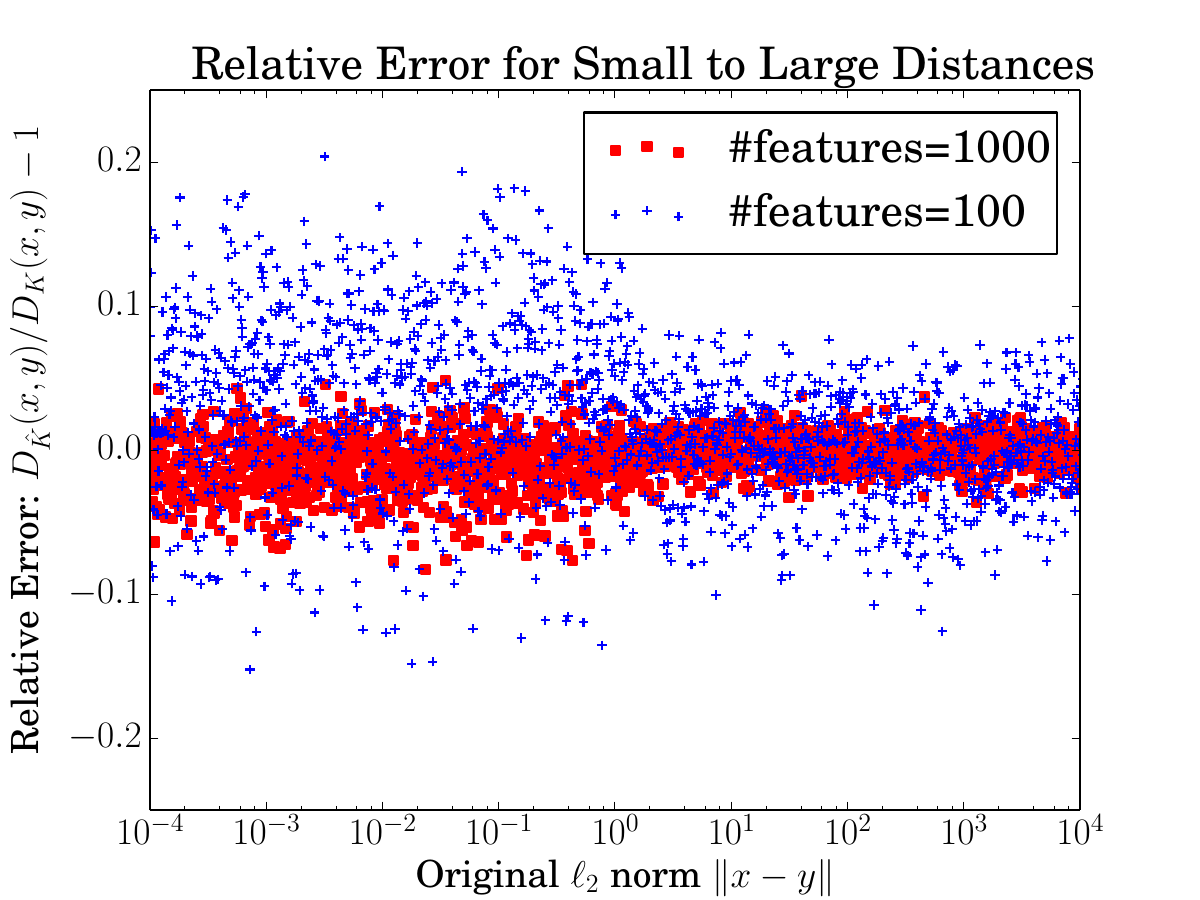}

\caption{\label{fig:VaryT+Dist}  \sffamily
(left) Inverse squared relative errors.  
(right) Relative errors with varying distance.} 
\end{figure}

\bibliographystyle{plain}
\bibliography{kernel}

\end{document}